\DeclareMathAlphabet{\mathpzc}{OT1}{pzc}{m}{it}
\newcommand{\cD}{\mathcal{D}}
\newcommand{\cF}{\mathcal{F}}
\newcommand{\cG}{\mathcal{G}}
\newcommand{\cI}{\mathcal{I}}
\newcommand{\cK}{\mathcal{K}}
\newcommand{\cS}{\mathcal{S}}
\newcommand{\cW}{\mathcal{W}}
\newcommand{\bbE}{\mathbb{E}}
\newcommand{\bbR}{\mathbb{R}}
\newcommand{\bB}{\bm{B}}
\newcommand{\bC}{\bm{C}}
\newcommand{\bG}{\bm{G}}
\newcommand{\bK}{\bm{K}}
\newcommand{\bU}{\bm{U}}
\newcommand{\bV}{\bm{V}}
\newcommand{\bW}{\bm{W}}
\newcommand{\bX}{\bm{X}}
\newcommand{\bY}{\bm{Y}}
\newcommand{\ba}{\bm{a}}
\newcommand{\bd}{\bm{d}}
\newcommand{\bg}{\bm{g}}
\newcommand{\bh}{\bm{h}}
\newcommand{\bw}{\bm{w}}
\newcommand{\bx}{\bm{x}}
\newcommand{\by}{\bm{y}}
\newcommand{\argmin}{\mathop{\rm argmin}}
\newcommand{\myparagraph}[1]{\smallskip\noindent\textbf{#1.}}
\DeclareMathOperator{\ICL}{\texttt{ICL}}
\newtheorem{theorem}{Theorem}
\newtheorem{prop}{Proposition}
\theoremstyle{definition}
\newtheorem{definition}{Definition}
\newtheorem{example}{Example}
\newtheorem{assumption}{Assumption}
\theoremstyle{remark}
\newtheorem{remark}{Remark}
\newcommand{\ra}[1]{\renewcommand{\arraystretch}{#1}}
\icmltitlerunning{The Ideal Continual Learner (ICL)}
\begin{document}

\twocolumn[
\icmltitle{The Ideal Continual Learner: An Agent That Never Forgets}



\icmlsetsymbol{equal}{*}

\begin{icmlauthorlist}
\icmlauthor{Liangzu Peng}{JHU,UPenn}
\icmlauthor{Paris V. Giampouras}{JHU}
\icmlauthor{Ren\'e Vidal}{UPenn,NORCE}
\end{icmlauthorlist}

\icmlaffiliation{JHU}{Mathematical Institute for Data Science, Johns Hopkins University, Baltimore, USA}
\icmlaffiliation{UPenn}{Innovation in Data Engineering and Science (IDEAS), University of Pennsylvania, Philadelphia, USA}
\icmlaffiliation{NORCE}{NORCE Norwegian Research Centre, Norway}

\icmlcorrespondingauthor{Liangzu Peng}{lpenn@seas.upenn.edu}

\icmlkeywords{Continual Learning, Catastrophic Forgetting, Generalization Bounds, Bilevel Optimization, Multitask Optimization, Constrained Learning, Stochastic Optimization with Expectation Constraints, Subspace Tracking, Streaming PCA, Matrix Factorization}

\vskip 0.3in
]



\printAffiliationsAndNotice{}  

\begin{abstract}
The goal of continual learning is to find a model that solves multiple learning tasks which are presented sequentially to the learner. A key challenge in this setting is that the learner may \textit{forget} how to solve a previous task when learning a new task, a phenomenon known as \textit{catastrophic forgetting}. To address this challenge, many practical  methods have been proposed, including memory-based, regularization-based, and expansion-based methods. However, a rigorous theoretical understanding of these methods remains elusive. This paper aims to bridge this gap between theory and practice by proposing a new continual learning framework called \textit{Ideal Continual Learner ($\ICL$)}, which is guaranteed to avoid catastrophic forgetting by construction. We show that $\ICL$ unifies multiple well-established continual learning methods and gives new theoretical insights into the strengths and weaknesses of these methods. We also derive generalization bounds for $\ICL$ which allow us to theoretically quantify \textit{how rehearsal affects generalization}. Finally, we connect $\ICL$ to several classic subjects and research topics of modern interest, which allows us to make historical remarks and inspire future directions.
\end{abstract}




\section{Introduction}
The goal of building intelligent machines that are adaptive and self-improving over time has given rise to the \textit{continual learning} paradigm, where the \textit{learner} needs to solve multiple tasks presented sequentially \citep{Thrun-RAS1995}. In this setting, a key challenge known as \textit{catastrophic forgetting} \citep{Nccloskey-1989} is that, unlike humans, the agent may \textit{forget} how to solve past tasks (i.e., exhibit larger errors on past tasks), after learning the present one.
To address this problem, many practical methods have been proposed. For example, \textit{\textbf{memory-based}} methods store data from past tasks and use it to solve the present task \citep{Lopez-NeurIPS2017}, \textit{\textbf{regularization-based}} methods add some regularization terms to the loss in order to prevent overfitting to the current task \citep{Kirkpatrick-2017}, and \textit{\textbf{expansion-based}} methods expand the network architecture to accommodate learning a new task \citep{Rusu-arXiv2016}.


Deep continual learning methods that embody either of these three ideas, or combinations thereof, have greatly improved the empirical performance in resisting catastrophic forgetting \citep{Awesome-LL}.\footnote{We discuss the most relevant references in the main paper as we proceed, and we review related works in detail in the appendix.\label{footnote:review} } While these methods have evolved so rapidly, at least two questions remain under-explored:
\begin{enumerate}[wide,parsep=-1pt,topsep=0pt,label=(Q\arabic*)]
    \item \label{D1} \textit{Is there any \textbf{mathematical} connection between continual learning and other conceptually related fields}? The answer seems elusive even for \textit{multitask learning}  (i.e., joint training of all tasks): \citet{Chaudhry-NeurIPS2020} and \citet{Mirzadeh-ICLR2021}, among many others, advocated that performing multitask learning gives the best performance for continual learning, while \citet{Wu-arXiv2022} claimed the opposite.
    \item \label{D2} \textit{How does the rehearsal mechanism \textbf{provably} affect generalization}? \citet{Lopez-NeurIPS2017} claimed that \textit{rehearsal} (i.e., joint training with part of previous data stored in memory and the data of the present task) would result in overfitting (and jeopardize generalization). \citet{Chaudhry-arXiv2019v4} rebutted against this claim by empirical evidence, while \citet{Verwimp-ICCV2021} offered empirical counter-evidence for what \citet{Chaudhry-arXiv2019v4} argued. Indeed, the answer to \ref{D2} has been unclear, which is why \citet{Verwimp-ICCV2021} posed this as a serious open problem.   
\end{enumerate}

\begin{figure}
    \centering
    \includegraphics[width=0.42\textwidth]{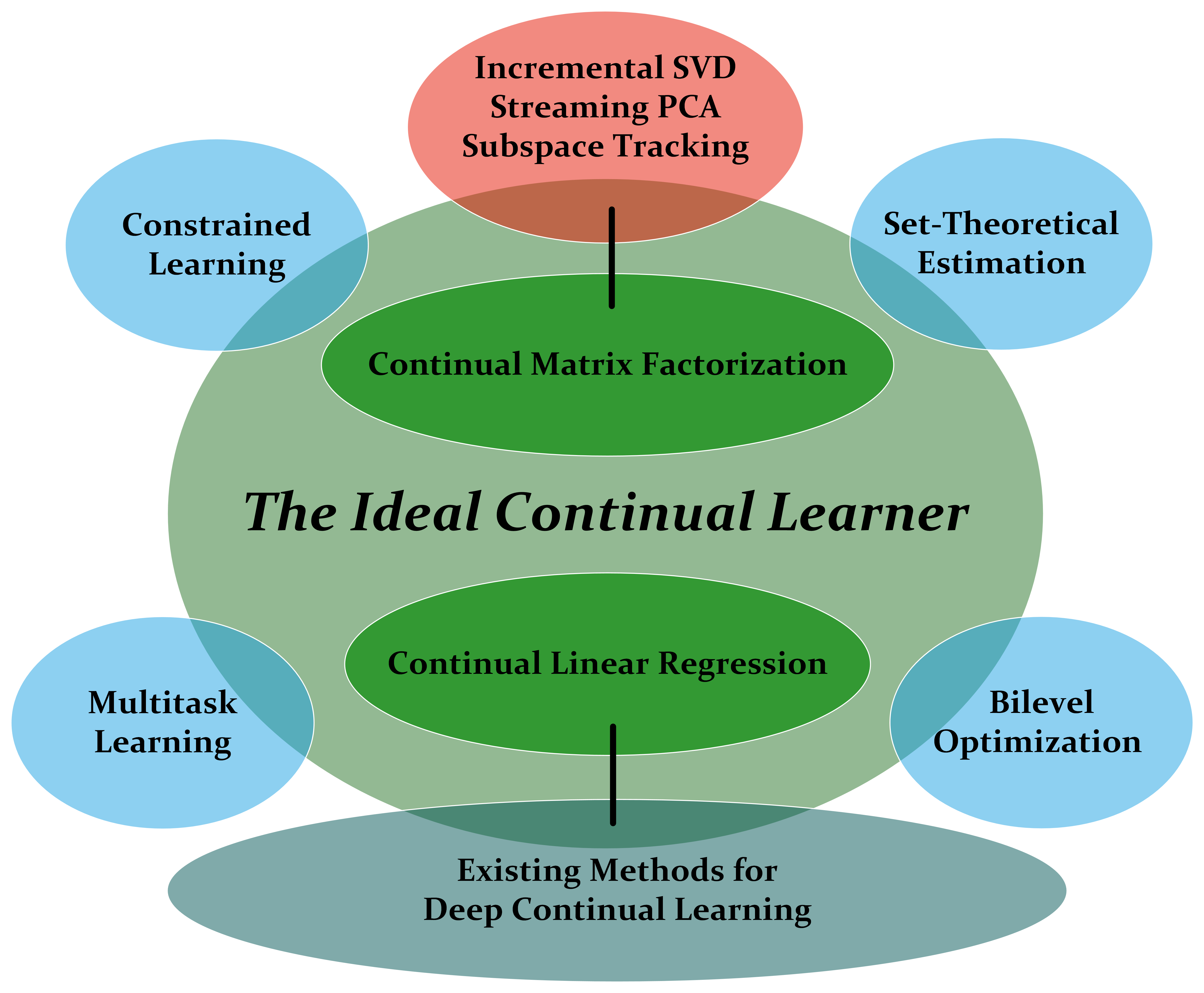}
    \caption{The \textit{Ideal Continual Learner} ($\ICL$) and related subjects. The exact connections between $\ICL$ and these subjects are discussed throughout the paper and appendix. }
    \label{fig:connection}
\end{figure}
\myparagraph{Our Contributions} In this paper, we focus on theoretically understanding continual learning and catastrophic forgetting by trying to answer Questions \ref{D1} and \ref{D2}. In particular:

\begin{table*}
\centering
	\ra{1.3}
	\begin{tabular}{lll}\toprule
        \S \ref{section:optimization} & \multicolumn{2}{c}{Optimization Basics of Continual Learning} \\
        & \S \ref{subsection:ICL-def} \& \S \ref{subsection:2principles} & Definition of $\ICL$ (Definition \ref{definition:CL-dagger}) \& $\ICL$ never forgets (Proposition \ref{prop:sufficiency}) \\
        & \S \ref{subsection:bilevel-dagger} \& \S \ref{subsection:multitask-dagger} &  $\ICL$ is a bilevel optimizer (Propositions \ref{prop:inequality-constrained-CL-dagger} and \ref{prop:convex-differentiable}) \& $\ICL$ is a multitask learner (Proposition \ref{prop:multitask}) \\
        \midrule 
        \S \ref{section:CLR-CMF} & \multicolumn{2}{c}{Examples of $\ICL$ \& Understanding Deep Continual Learning } \\ 
         &  \S \ref{subsubsection:DCL-ICL-CLR} & $\ICL$ is a \textit{\textbf{memory-based (projection-based) method}} (Proposition \ref{proposition:OGD} and Remark \ref{remark:projection-methods}) \\ 
         & \S \ref{subsubsection:DCL-ICL-CMF} & $\ICL$ is an \textit{\textbf{expansion-based method}} \& wide neural networks forget less catastrophically \\
         \midrule 
        \S \ref{section:generalization} & \multicolumn{2}{c}{Generalization Basics of Continual Learning} \\
        &\S \ref{subsection:constrained-learning} & $\ICL$ and constrained learning \& $\ICL$ is a \textit{\textbf{regularization-based method}} \\
	&\S \ref{subsection:rehearsal} &  Generalization guarantees of rehearsal  (Theorem \ref{theorem:rehearsal}) \& remarks on memory selection methods \\ 
		\bottomrule
	\end{tabular}
	\caption{Structure and messages of the paper.} \label{table:messages}
\end{table*}

\begin{itemize}[wide,parsep=-1pt,topsep=0pt]
    \item We propose a general framework for continual learning, called the \textit{Ideal Continual Learner ($\ICL$)}, and we show that, under mild assumptions,  $\ICL$ \textit{never forgets}. This characterization of never forgetting makes it possible to address Questions \ref{D1} and \ref{D2} via dissecting the optimization and generalization properties of $\ICL$.

    \item Question \ref{D1} is considered throughout the paper: We bring to light the connection of $\ICL$ to many other fields---visualized in Figure \ref{fig:connection}---including \textit{set-theoretical estimation} (\S \ref{subsection:2principles}), \textit{bilevel optimization} (\S \ref{subsection:bilevel-dagger}), multitask learning (\S \ref{subsection:multitask-dagger}), deep continual learning (\S \ref{section:CLR-CMF}), \textit{incremental SVD} (\S \ref{subsection:CMF}), and \textit{constrained learning} (\S \ref{subsection:constrained-learner}). In particular, we dissect $\ICL$ in two examples, \textit{continual learning regression} (\S \ref{subsection:CLR}) and \textit{continual matrix factorization}  (\S \ref{subsection:CMF}), showing that $\ICL$ is a \textit{\textbf{memory-based optimization method}} for the former example and an \textit{\textbf{expansion-based}} for the latter. We also connect $\ICL$ to \textit{\textbf{regularization-based}} methods (\S \ref{subsection:constrained-learner}). These shed considerable lights on many aspects of existing deep continual learning methods, e.g., their failure cases, the role of memory and network widths, and so on. With all these connections, we provide historical context and novel insights for future research avenues.

    \item Question \ref{D2} is explored in \S \ref{section:generalization}, where we prove the generalization properties of $\ICL$ based on the classic statistical learning frameworks. Crucially, our theory quantifies how rehearsal influences generalization, shedding light on \ref{D2}.
\end{itemize} 
Since the paper consists of multiple messages relevant to continual learning, it might be beneficial to overview them in Table \ref{table:messages}, which includes direct links to the contents and aims at helping the reader navigate.

\section{Continual Learning Basics: Optimization}\label{section:optimization}
\subsection{Problem Setup}
Consider tasks $1,\dots,T$, where task $t$ can be solved by minimizing some problem of the form
\begin{align}\label{eq:task-t-dagger}
	\cG_t:=\argmin_{\bw \in \cW} L_t(\bw;D_t).
\end{align}
Here, $D_t$ is the given data set for task $t$, and $\cG_t$ is the set of global minimizers of the objective $L_t$ \eqref{eq:task-t-dagger}. By \textit{continual learning}, we mean solving the tasks sequentially from task $1$ to task $T$ and finding some ground truth in the \textit{hypothesis space} $\cW\subset \bbR^n$ that  minimizes all losses \eqref{eq:task-t-dagger}; if an algorithm can do so, then we say it never \textit{forgets}. While each task could have a different hypothesis space, we trade this generality for simplifying the presentation. 

For such ground truth to exist, we need a basic assumption:
\begin{assumption}[Shared Multitask Model$^\dagger$]\label{assumption:realizability-dagger}
	All tasks  \eqref{eq:task-t-dagger} share a common global minimizer, i.e., $\cap_{t=1}^T \cG_t \neq \varnothing$.
\end{assumption}

The intuition behind 
Assumption \ref{assumption:realizability-dagger}, which generalizes those of \citet{Evron-COLT2022} and \citet{Pengb-NeurIPS2022}, is that if $\cap_{t=1}^T \cG_t = \varnothing$ then there is no shared global minimizer and continual learning without forgetting is infeasible. Assumption \ref{assumption:realizability-dagger} might be relaxed into the existence of \textit{approximate} common global minimizers; we do not pursue this idea here.

Note that verifying whether $\cap_{t=1}^T \cG_t$ is empty or not is in general NP-hard even when every $\cG_t$ is a ``simple'' polytope \citep{Tiwary-DCG2008}, which is the main reason that \citet{Knoblauch-ICML2020} asserts continual learning without forgetting is in general NP-hard. However, Assumption \ref{assumption:realizability-dagger} sidesteps the curse of the NP-hardness, and  makes it possible to solve the continual learning problem computationally efficiently.

\subsection{The Ideal Continual Learner$^\dagger$ (\texttt{ICL}$^\dagger$) }\label{subsection:ICL-def}
The main role of this section and the paper, is this:
\begin{definition}[The Ideal Continual Learner$^\dagger$,   $\ICL^\dagger$]\label{definition:CL-dagger} 
	With $\cK_0:=\cW$, $\ICL^\dagger$ is an algorithm  that solves the following program sequentially for  $t=1,2,\dots,T$:
	\begin{align}\label{eq:task-t-continual-dagger}
		\cK_t \gets\argmin_{\bw \in \cK_{t-1} } L_t(\bw;D_t).
	\end{align}
\end{definition}
The symbol $\dagger$ in Definition \ref{definition:CL-dagger} is a reminder that $\ICL^\dagger$ has not been proved (until \S \ref{section:generalization}) to be a learner in the \textit{statistical learning} sense. Recursion \eqref{eq:task-t-continual-dagger} asks for a lot: Computing the entire set  $\cK_1$ of global minimizers is hard enough, let alone constraining over it, recursively! Indeed, we use the word ``\textit{ideal}'' to imply that  $\ICL^\dagger$ is \textit{not realizable} at the current stage of research. However, take a leap of faith, and we will see many pleasant consequences of $\ICL^\dagger$ under Assumption \ref{assumption:realizability-dagger}, where the \textit{theoretical significance} of $\ICL^\dagger$ is treasured. 

An immediate observation is that $\cW=\cK_0\supset\cdots \supset \cK_T$, that is $\cK_t$ \textit{shrinks} (more precisely, does not \textit{grow}) over time. An analogy can be made from two perspectives. From a \textit{human learning} perspective, $\cK_t$ can be viewed as a \textit{knowledge representation}: After continually reading a 1000-page book, one internalizes the knowledge and represents the whole book with a few key notations or formulas, or with a single cheatsheet (but see also Remark \ref{remark:grow}, \textit{Grow} $=$ \textit{Shrink}). From a \textit{control} perspective, the analogy is that the \textit{uncertainty} $\cK_t$  over the true solutions reduces as $\ICL^\dagger$ learns from data and tasks. In fact, as we will review in Appendix \ref{section:set-theoretical-est}, $\ICL^\dagger$ is closely related to \textit{set-theoretical estimation} in control \cite{Combettes-IEEE1993,Kieffer-CDC1998}.

\subsection{\texttt{ICL}$^\dagger$ is Sufficient and Minimal}\label{subsection:2principles}
In this section, we show that the Ideal Continual Learner$^\dagger$ is \textit{sufficient}  \citep{Fisher-1922} and \textit{minimal} \citep{Lehmann-1950}, which are two general and fundamental properties in the information-theoretical and statistical sense. 

\myparagraph{Sufficiency} It has been held that ``\textit{... preventing forgetting by design is therefore not possible [even if all losses $L_t$  \eqref{eq:task-t-dagger} are the same]}'' \citep{Van-NMI2022}. Nevertheless, under Assumption \ref{assumption:realizability-dagger}, $\ICL^\dagger$ never forgets by design: 
\begin{prop}[Sufficiency]\label{prop:sufficiency}
	Under Assumption \ref{assumption:realizability-dagger}, $\ICL^\dagger$ solves all tasks optimally. In other words, we have $\cK_t =\cap_{i=1}^t \cG_i$ for every $t=1,\dots, T$.
\end{prop}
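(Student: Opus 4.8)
The plan is to establish the identity $\cK_t = \cap_{i=1}^t \cG_i$ by induction on $t$, with Assumption \ref{assumption:realizability-dagger} entering only to guarantee that restricting the minimization of $L_t$ to the shrinking feasible set $\cK_{t-1}$ never discards any of the relevant minimizers. The final claim (``$\ICL^\dagger$ solves all tasks optimally'') is then immediate by specializing to $t=T$, since $\cK_T = \cap_{i=1}^T \cG_i \subseteq \cG_t$ for every $t$.

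For the base case $t=1$: because $\cK_0 = \cW$ by definition, the recursion \eqref{eq:task-t-continual-dagger} reads $\cK_1 = \argmin_{\bw \in \cW} L_1(\bw;D_1)$, which is exactly $\cG_1 = \cap_{i=1}^1 \cG_i$ by \eqref{eq:task-t-dagger}. For the inductive step, assume $\cK_{t-1} = \cap_{i=1}^{t-1}\cG_i$ and abbreviate $S := \cK_{t-1}$. The key observation is that $S$ contains at least one \emph{global} (over all of $\cW$) minimizer of $L_t$: indeed, Assumption \ref{assumption:realizability-dagger} supplies a point $\bw^\star \in \cap_{i=1}^T \cG_i \subseteq S \cap \cG_t$, and since $S \subseteq \cW$, this $\bw^\star$ attains the unconstrained optimal value $\min_{\bw \in \cW} L_t(\bw;D_t)$ while lying in $S$. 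Hence $\min_{\bw \in S} L_t(\bw;D_t) = \min_{\bw \in \cW} L_t(\bw;D_t)$, so the minimizer set over $S$ is precisely $\{\bw \in S : L_t(\bw;D_t) = \min_{\cW} L_t\} = S \cap \cG_t = \cap_{i=1}^t \cG_i$, which by \eqref{eq:task-t-continual-dagger} equals $\cK_t$. This closes the induction.

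The argument is essentially set-theoretic bookkeeping, and I do not expect a genuine obstacle; the only step deserving care — and the sole place a hypothesis is truly used — is the equality $\min_{\bw \in S} L_t = \min_{\bw \in \cW} L_t$. Without Assumption \ref{assumption:realizability-dagger}, the feasible set $S$ could fail to contain any global minimizer of $L_t$ (indeed, downstream it could even become empty), in which case $\argmin_{\bw \in S} L_t$ would generally be strictly larger than $S \cap \cG_t$ and the chain of equalities would break. The realizability assumption is exactly what prevents this degeneracy at every stage $t \le T$, and it is worth noting in the write-up that one uses the \emph{full} intersection $\cap_{i=1}^T \cG_i$ (not merely $\cap_{i=1}^t \cG_i$) only for convenience — any nonemptiness of $\cap_{i=1}^t\cG_i$ for each $t$ would suffice, and this is implied by Assumption \ref{assumption:realizability-dagger}.
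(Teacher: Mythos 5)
Your proof is correct and follows essentially the same route as the paper's: induction on $t$, with the nonemptiness of $\cap_{i=1}^t\cG_i$ (implied by Assumption~\ref{assumption:realizability-dagger}) guaranteeing that the constrained minimum over $\cK_{t-1}$ equals the unconstrained minimum of $L_t$, so the constrained minimizer set is $\cK_{t-1}\cap\cG_t$. Your write-up is simply a more explicit version of the paper's terse argument, and your closing remark that only $\cap_{i=1}^t\cG_i\neq\varnothing$ for each $t$ is needed is accurate.
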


\myparagraph{Minimality} Here we show (with rigor) that the knowledge representation $\cK_t$ is \textit{minimal} (given the order of the tasks). Consider the first two tasks, which are associated with objective functions $L_1$ and $L_2$ and global minimizers $\cG_1$ and $\cG_2$ respectively. Assume that $\cG_1$ and $\cG_2$ intersect (Assumption \ref{assumption:realizability-dagger}), and let $\hat{\bw}\in \cG_1\cap\cG_2$. A learner solves the first task by minimizing $L_1$ \eqref{eq:task-t-dagger}, stores some information $\cI_1$, and proceeds to the second (the stored information $\cI_1$ could consist of some data samples, gradients, global minimizers, and so on). If the stored information $\cI_1$ is not enough to reveal that $\hat{\bw}$ is optimal to task $1$, then it is impossible for the learner to figure out that $\hat{\bw}$ is actually simultaneously optimal to both tasks---even if it could find later that $\hat{\bw}$ is optimal to task $2$. Thus, either the learner would conclude  that no common global minimizer exists and Assumption \ref{assumption:realizability-dagger} is violated, or catastrophic forgetting inevitably takes place. In an independent effort, \citet{Pengb-NeurIPS2022} made a similar argument specifically for two-layer neural networks. Our argument is more general, and is for a different purpose.


Note that $\hat{\bw}$ can be any element of $\cK_2=\cG_1\cap\cG_2$. Thus, storing a proper subset of  $\cK_2$ is sub-optimal since that might exclude global minimizers (e.g., $\hat{\bw}$) of subsequent tasks corresponding to $t>2$ and lead to catastrophic forgetting. In light of this, we observe that catastrophic forgetting can only be prevented if we store the entire set $\cK_t$ or its \textit{equivalent information} (we will soon see some equivalent representations of $\cK_t$). In this sense, we say the knowledge representation $\cK_t$ is minimal. It is the two properties, sufficiency and minimality, which $\ICL^\dagger$ (Definition \ref{definition:CL-dagger}) enjoys, that justify the naming: \textit{the} Ideal Continual Learner$^\dagger$ that never forgets.

\subsection{\texttt{ICL}$^\dagger$ $=$ Bilevel Optimizer}\label{subsection:bilevel-dagger}
Since the constraint $\bw\in\cK_{t-1}$ of \eqref{eq:task-t-continual-dagger} requires $\bw$ to be a global minimizer of all previous tasks,  for each task $\ICL^\dagger$ needs to solve a bilevel program \eqref{eq:task-t-continual-dagger}, which is in general difficult \citep{Vicente-1994,Jiang-arXiv2022}. However, with Assumption \ref{assumption:realizability-dagger}, we can describe \eqref{eq:task-t-continual-dagger} in a relatively simple way. The first description is immediate:

\begin{prop}\label{prop:inequality-constrained-CL-dagger}
    Under Assumption \ref{assumption:realizability-dagger}, for every $t=1,\dots,T$, the recursion \eqref{eq:task-t-continual-dagger} of $\ICL^\dagger$ is equivalent to 
	\begin{equation}\label{eq:inequality-constrained-dagger}
		\begin{split}
			&\ \min_{\bw \in\cW} L_t(\bw; D_t) \\
			\textnormal{s.t.} &\ L_i(\bw; D_i) \leq c_i,\ \forall i=1,\dots, t-1,
		\end{split}
	\end{equation}
	where $c_i$ is the minimum value  of $L_i(\bw; D_i)$ over $\cW$, computed during solving previous tasks.
\end{prop}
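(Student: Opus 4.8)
The plan is to reduce everything to Proposition \ref{prop:sufficiency}, which already gives $\cK_{t-1}=\cap_{i=1}^{t-1}\cG_i$ under Assumption \ref{assumption:realizability-dagger}. Granting this, it suffices to show that the feasible set of \eqref{eq:inequality-constrained-dagger} coincides with $\cap_{i=1}^{t-1}\cG_i$: once that is established, \eqref{eq:inequality-constrained-dagger} is literally the same program as $\min_{\bw\in\cK_{t-1}}L_t(\bw;D_t)$, i.e.\ \eqref{eq:task-t-continual-dagger}, hence has the same set of global minimizers, namely $\cK_t$.

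First I would pin down the constant $c_i$. By construction $\ICL^\dagger$ produces $c_i$ as the optimal value of the $i$-th program $\min_{\bw\in\cK_{i-1}}L_i(\bw;D_i)$, a minimum over the shrinking set $\cK_{i-1}$ rather than over all of $\cW$; I claim the two minima agree. Assumption \ref{assumption:realizability-dagger} supplies a point $\bw^\star\in\cap_{j=1}^{T}\cG_j$. Applying Proposition \ref{prop:sufficiency} at stage $i-1$ gives $\bw^\star\in\cK_{i-1}$, while $\bw^\star\in\cG_i$ means $\bw^\star$ already minimizes $L_i(\,\cdot\,;D_i)$ over the whole of $\cW$. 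Since $\cK_{i-1}\subseteq\cW$, the minimum over $\cK_{i-1}$ equals the minimum over $\cW$, both attained at $\bw^\star$; this common value is $c_i$, so indeed $c_i=\min_{\bw\in\cW}L_i(\bw;D_i)$.

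Next I would identify each sublevel set with a minimizer set. Because $c_i$ is the minimum value of $L_i(\,\cdot\,;D_i)$ over $\cW$, every $\bw\in\cW$ satisfies $L_i(\bw;D_i)\geq c_i$, with equality exactly when $\bw\in\cG_i$. Hence $\{\bw\in\cW: L_i(\bw;D_i)\leq c_i\}=\cG_i$ for each $i=1,\dots,t-1$. Intersecting these identities shows the feasible set of \eqref{eq:inequality-constrained-dagger} equals $\cap_{i=1}^{t-1}\cG_i=\cK_{t-1}$ (the last equality by Proposition \ref{prop:sufficiency}), which completes the argument by the reduction described above.

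The only delicate point—the one I would be careful to spell out—is the characterization of $c_i$: the value computed "online" by $\ICL^\dagger$ is a priori only a minimum over $\cK_{i-1}$, and one must invoke Assumption \ref{assumption:realizability-dagger} through Proposition \ref{prop:sufficiency} to guarantee a common minimizer survives in $\cK_{i-1}$ and forces the two values to coincide. Everything else is the elementary fact that the $c$-sublevel set of a function equals its set of global minimizers precisely when $c$ is its global minimum value.
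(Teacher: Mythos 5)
Your proof is correct, and it is essentially the argument the paper has in mind: the paper declares this proposition ``immediate'' and supplies no written proof, so your write-up simply fills in the omitted details via Proposition \ref{prop:sufficiency} and the identity $\{\bw\in\cW: L_i(\bw;D_i)\leq c_i\}=\cG_i$. You are also right to flag the one genuinely non-trivial point, namely that the value $c_i$ produced online is a minimum over $\cK_{i-1}$ rather than over $\cW$, and that Assumption \ref{assumption:realizability-dagger} is exactly what forces the two to coincide; this is the step the paper glosses over.
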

In Proposition \ref{prop:inequality-constrained-CL-dagger}, the inequality of $L_i(\bw; D_i) \leq c_i$ can be replaced by equality $=$, and the constraint of \eqref{eq:inequality-constrained-dagger} is understood as trivially fulfilled if $t=1$.
\begin{remark}[$\cK_t$ = Loss + Data]
    If we store all data and losses, we can recover $\cK_t$ via minimizing \eqref{eq:inequality-constrained-dagger}, hence (obviously) memorizing data and losses can prevent forgetting.
\end{remark}

A formulation similar to \eqref{eq:inequality-constrained-dagger} is known in the literature; see, e.g., \citet{Lopez-NeurIPS2017}. However, as in many continual learning papers, their formulation is dominated by computational considerations in the deep learning context, e.g., their $c_i$ is the loss of a sufficiently trained deep network for the $i$-th task over part of data samples; it is not necessarily a minimum value. Such approach, though practically important, makes it hard to derive theoretical properties.


$\ICL^\dagger$ is also equivalent to the following formulation:
\begin{prop}\label{prop:convex-differentiable}
	Assume objective functions $L_1,\dots,L_T$ are convex and differentiable. Let $\cW=\bbR^n$. Under Assumption \ref{assumption:realizability-dagger}, each step \eqref{eq:task-t-continual-dagger} of $\ICL^\dagger$ is equivalent to 
	\begin{equation}\label{eq:task-t-continual-grad-dagger}
		\begin{split}
			&\ \min_{\bw \in \bbR^n} L_t(\bw; D_t) \\
			\textnormal{s.t.} &\ \nabla L_i(\bw; D_i) =0,\ \forall i=1,\dots, t-1,
		\end{split}
	\end{equation}
\end{prop}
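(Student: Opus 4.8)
The plan is to show that the two programs have the same feasible set and the same objective, whence they define the same set of optimizers, namely $\cK_t$. There are really two ingredients. First, I would rewrite the feasible set of \eqref{eq:task-t-continual-dagger}: by Proposition \ref{prop:sufficiency}, Assumption \ref{assumption:realizability-dagger} gives $\cK_{t-1}=\cap_{i=1}^{t-1}\cG_i$ (with the convention $\cK_0=\cW=\bbR^n$), so the constraint ``$\bw\in\cK_{t-1}$'' is precisely ``$\bw\in\cG_i$ for every $i=1,\dots,t-1$,'' i.e.\ $\bw$ is a global minimizer of each previous loss over $\bbR^n$.

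Second --- the only substantive step --- I would invoke the first-order characterization of minimizers for convex differentiable functions: for $L_i(\cdot;D_i)$ convex and differentiable on all of $\bbR^n$, the gradient inequality $L_i(\bz;D_i)\ge L_i(\bw;D_i)+\langle\nabla L_i(\bw;D_i),\,\bz-\bw\rangle$ shows that $\nabla L_i(\bw;D_i)=0$ is both necessary and sufficient for $\bw\in\cG_i$. Hence $\cG_i=\{\bw\in\bbR^n:\nabla L_i(\bw;D_i)=0\}$, and the feasible set of \eqref{eq:task-t-continual-dagger} coincides with that of \eqref{eq:task-t-continual-grad-dagger}. Since the objective $L_t(\cdot;D_t)$ is identical in the two programs, their sets of global minimizers coincide, and this common set is $\cK_t$ by Definition \ref{definition:CL-dagger}. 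The base case $t=1$ is immediate: both programs reduce to the unconstrained $\min_{\bw\in\bbR^n}L_1(\bw;D_1)$.

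The proof has essentially no obstacle beyond bookkeeping; the one point to be careful about is the reading of ``equivalent,'' which I would interpret as ``the two programs have the same set of optimizers.'' This covers the degenerate situation where the minimum is not attained (then both sets are empty). Under Assumption \ref{assumption:realizability-dagger} this degeneracy does not arise: any $\hat{\bw}\in\cap_{i=1}^{T}\cG_i$ lies in the feasible set and is a global minimizer of $L_t(\cdot;D_t)$ over $\bbR^n$, hence attains the constrained minimum, so $\cK_t\ni\hat{\bw}$ is nonempty --- exactly as already recorded in Proposition \ref{prop:sufficiency}.

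One could optionally remark that differentiability on all of $\bbR^n$ (together with $\cW=\bbR^n$) is what makes the stationarity condition clean; on a proper convex subset $\cW$ one would instead get a variational inequality, and the equality constraint $\nabla L_i=0$ would have to be replaced accordingly. I would leave this as a short comment rather than prove it, since it is tangential to the statement.
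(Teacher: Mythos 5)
Your proof is correct and is exactly the argument the paper leaves implicit (no proof of this proposition appears in the appendix): Proposition \ref{prop:sufficiency} identifies the feasible set of \eqref{eq:task-t-continual-dagger} with $\cap_{i=1}^{t-1}\cG_i$, and the first-order characterization of global minimizers of convex differentiable functions on $\bbR^n$ identifies each $\cG_i$ with $\{\bw:\nabla L_i(\bw;D_i)=0\}$. Your added care about the reading of ``equivalent'' and about attainment under Assumption \ref{assumption:realizability-dagger} is sound and goes slightly beyond what the paper records.
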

\begin{remark}[$\cK_t$ $=$ Gradient Equations]
    If we store all equations $\nabla L_i(\bw; D_i) = 0$, we can recover $\cK_t$ via solving \eqref{eq:task-t-continual-grad-dagger}, so memorizing these equations resists forgetting.
\end{remark}
More generally, a common idea in bilevel programming is to rewrite the \textit{lower-level problem} (e.g., $\bw\in\cK_{t-1}$) as KKT conditions, and many algorithms exist for the latter formulation; see, e.g., \citet{Dempe-MP2012} and the follow-up works. While it is beyond the scope of the paper, contextualizing these ideas for implementing $\ICL^\dagger$ \eqref{eq:task-t-continual-dagger} is an important direction that would inspire novel and theoretically grounded continual learning algorithms. Note that bilevel programming has been used for continual learning \citep{Borsos-NeurIPS2020}, but the use is mainly for selecting which data samples to store in the memory.



\subsection{\texttt{ICL}$^\dagger$ $=$ Multitask Learner$^\dagger$}\label{subsection:multitask-dagger}
Under Assumption \ref{assumption:realizability-dagger}, the following connection between continual learning and multitask learning arises naturally:
\begin{prop}[$\ICL^\dagger$ = Multitask Learner$^\dagger$]\label{prop:multitask}
	Let $\alpha_1,\dots,\alpha_t$ be arbitrary positive numbers. Recall $\cK_t$ is defined in \eqref{eq:task-t-continual-dagger} as the output of $\ICL^\dagger$. Under Assumption \ref{assumption:realizability-dagger}, we have
	\begin{align}\label{eq:CL=multitask}
		\cK_t = \argmin_{w\in \cW} \sum_{i=1}^t  \alpha_i L_i(\bw; D_i), \ \ \ \forall t=1,\dots,T.
	\end{align}
\end{prop}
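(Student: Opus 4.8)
The plan is to reduce the claim to Proposition \ref{prop:sufficiency}, which already identifies $\cK_t$ with $\cap_{i=1}^t \cG_i$, and then to prove the elementary fact that a positively-weighted sum of objectives, each bounded below by its own minimum, attains its minimum exactly on the common minimizer set. So the proof is really a two-line observation once Proposition \ref{prop:sufficiency} is in hand.

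First I would set $c_i := \min_{\bw\in\cW} L_i(\bw; D_i)$ for $i=1,\dots,t$, so that by definition $\cG_i = \{\bw\in\cW : L_i(\bw;D_i)=c_i\}$. For every $\bw\in\cW$ we have $L_i(\bw;D_i)\ge c_i$, and since all $\alpha_i>0$ this gives $\sum_{i=1}^t \alpha_i L_i(\bw;D_i)\ge \sum_{i=1}^t \alpha_i c_i$; thus $\sum_{i=1}^t\alpha_i c_i$ is a lower bound for the multitask objective over $\cW$. Next I would observe that equality holds if and only if $\alpha_i\,(L_i(\bw;D_i)-c_i)=0$ for every $i$; because each $\alpha_i>0$ and each term $L_i(\bw;D_i)-c_i$ is nonnegative, this forces $L_i(\bw;D_i)=c_i$ for all $i$, i.e. $\bw\in\cap_{i=1}^t\cG_i$, and conversely any such $\bw$ attains the bound. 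Hence the set of minimizers of $\sum_{i=1}^t\alpha_i L_i(\cdot;D_i)$ over $\cW$ equals $\cap_{i=1}^t\cG_i$. Finally, applying Proposition \ref{prop:sufficiency} to rewrite $\cap_{i=1}^t\cG_i$ as $\cK_t$ completes the argument.

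The one point that deserves care rather than being a genuine obstacle is the role of Assumption \ref{assumption:realizability-dagger}: it is needed both so that $\cap_{i=1}^t\cG_i$ is nonempty (otherwise the displayed identity is vacuous) and so that the lower bound $\sum_{i=1}^t\alpha_i c_i$ is actually \emph{achieved}, which is what makes the set of minimizers coincide with the common-minimizer set rather than being empty. I would also remark that, in contrast to Proposition \ref{prop:convex-differentiable}, this argument uses neither convexity nor differentiability of the $L_i$ — only that each individual infimum $c_i$ is attained on $\cW$, which again is exactly what Assumption \ref{assumption:realizability-dagger} secures.
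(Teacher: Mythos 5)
Your proof is correct and follows essentially the same route as the paper's: lower-bound the weighted sum by $\sum_i \alpha_i c_i$, observe that equality holds exactly on $\cap_{i=1}^t\cG_i$, and invoke Proposition \ref{prop:sufficiency} together with Assumption \ref{assumption:realizability-dagger} to identify this set with the (nonempty) $\cK_t$. Your version is slightly more explicit about the equality case and about where the assumption is used, but the argument is the same.
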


Assumption \ref{assumption:realizability-dagger} implies that  $\ICL^\dagger$ never forgets and provides the best performance for continual learning, and that minimizing the multitask loss \eqref{eq:CL=multitask} recovers $\ICL^\dagger$. Hence, under Assumption \ref{assumption:realizability-dagger}, minimizing the multitask loss yields the best performance for continual learning. This simple result addresses issues pertaining to Question \ref{D1}.

\section{Examples of \texttt{ICL}$^\dagger$}\label{section:CLR-CMF}
This section discusses $\ICL^\dagger$ for two examples, \textit{continual linear regression} (\S \ref{subsection:CLR}), \textit{continual matrix factorization} (\S \ref{subsection:CMF}). By way of examples, we acquire some understanding of existing deep continual learning practices (\S \ref{subsubsection:DCL-ICL-CLR}, \S \ref{subsubsection:DCL-ICL-CMF}).

\subsection{Continual Linear Regression}\label{subsection:CLR}
In \S \ref{subsubsection:background-CLR}, we review the problem setup of continual linear regression \citep{Evron-COLT2022}. In \S \ref{subsubsection:ICL-CLR} we present the implementation of $\ICL^\dagger$ for continual linear regression. In \S \ref{subsubsection:DCL-ICL-CLR} we draw connections to deep continual learning, shedding light on \textit{\textbf{memory-based optimization methods}}.

\subsubsection{Background and Problem Setup}\label{subsubsection:background-CLR} 
In \textit{continual} linear regression, the data $D_t:=(\bX_t,\by_t)$ of each task $t$ consists of a feature matrix $\bX_t\in\bbR^{m_t\times n}$ and a response vector $\by_t\in\bbR^{m_t}$; here $m_t$ is the number of samples of task $t$ and $n$ their dimension, and the loss $L_t$ is 
\begin{align}\label{eq:task-CLR}
	L_t(\bw; (\bX_t,\by_t)) = \| \bX_t \bw - \by_t \|_2^2.
\end{align}

The set $\cG_t$ of global minimizers for each task \eqref{eq:task-CLR} is exactly an affine subspace. Under Assumption \ref{assumption:realizability-dagger}, the intersection of these affine subspaces is not empty, and finding one element in that intersection would solve all tasks optimally. Note that, if $\bX_t$ is of full column rank, then solving task $t$ yields a unique solution, which is optimal to all tasks by Assumption \ref{assumption:realizability-dagger}. To avoid this trivial case, we follow \citet{Evron-COLT2022} and assume $\bX_t$ is rank-deficient for every $t$. Studying continual regression is of interest as (continual) regression is closely related to deep (continual) learning in the \textit{neural tangent kernel} regime \citep{Jacot-NeurIPS2018}. Moreover, as will be shown in \S \ref{subsubsection:DCL-ICL-CLR}, $\ICL^\dagger$ connects continual linear regression to deep continual learning even more tightly. 


With some initialization, the algorithm of \citet{Evron-COLT2022} proceeds in a successive (or alternating) projection fashion (Figure \ref{fig:CLR-Evron}). When presented with task $t$, it projects onto the affine subspace $\cG_t=\{\bw\in \bbR^n: \bX_t^\top\bX_t \bw =\bX_t^\top \by_t  \}$; this projection can be implemented via invoking a singular value decomposition (SVD) or leveraging the implicit bias of (stochastic) gradient descent. \citet{Evron-COLT2022} employed the latter approach as it is \textit{memoryless}, i.e., it does not use any extra storage except the estimate and current data. However, being memoryless might entail catastrophic forgetting in the worst case \citep[Theorem 7]{Evron-COLT2022}.


\tikzset{
  big arrow/.style={
    decoration={markings,mark=at position 1 with {\arrow[scale=0.9,#1]{>}}},
    postaction={decorate},
    shorten >=0.3pt},
  big arrow/.default=blue}
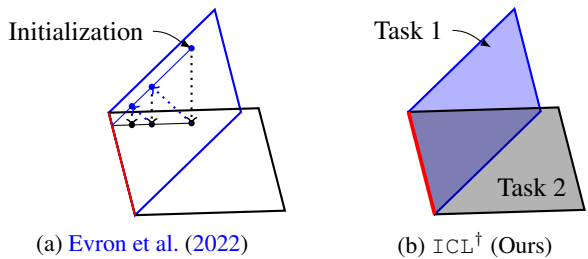
\begin{figure}[]
	\centering
	\tdplotsetmaincoords{70}{85}
	\subfloat[\citet{Evron-COLT2022}]{
		\begin{tikzpicture}[tdplot_main_coords]
			
			\draw[thick,blue] (-2,1.414,1.414) -- (-2,0,0) -- (2,0,0) -- (2,1.414,1.414) -- cycle;
			
			\draw[thick] (-2,0,0) -- (-2,2,0) -- (2,2,0) -- (2,0,0) -- cycle;
			\draw[thick,red](-2,0,0)--(2,0,0);
			
			\foreach \x in {0,...,2}
			\fill[blue] (-1.5,1.0605/2^\x,1.0605/2^\x) circle (1.2pt);
			\foreach \x in {0,...,2}
			\fill[black] (-1.5,1.0605/2^\x,0) circle (1.2pt);
			\foreach \x in {0,...,2}
			\draw[thick,big arrow=black,black,dotted] (-1.5,1.0605/2^\x,1.0605/2^\x) -- (-1.5,1.0605/2^\x,0);
			\foreach \x in {0,...,1}
			\draw[thick,big arrow=blue,blue,dotted] (-1.5,1.0605/2^\x,0) -- (-1.5,1.0605/2^\x/2,1.0605/2^\x/2);
			
			\node[anchor= east] (line) at (-1,0.5,1.5) {Initialization};
			\draw[-latex] (line) to[out=0,in=150] (-1.5,1.0605,1.0605);

                \draw[blue] (-1.5,1.0605,1.0605) -- (-1.5,0,0);
                \draw[black] (-1.5,1.0605,0) -- (-1.5,0,0);
		\end{tikzpicture}
		\label{fig:CLR-Evron}
	}
	\ \ \ \ \ \ \ \ \ 
	\subfloat[$\ICL^\dagger$ (Ours)]{
		\begin{tikzpicture}[tdplot_main_coords]
			
			\draw[thick,blue] (-2,1.414,1.414) -- (-2,0,0) -- (2,0,0) -- (2,1.414,1.414) -- cycle;
			\fill[blue,opacity=0.3] (-2,1.414,1.414) -- (-2,0,0) -- (2,0,0) -- (2,1.414,1.414) -- cycle;
			
			\draw[thick] (-2,0,0) -- (-2,2,0) -- (2,2,0) -- (2,0,0) -- cycle;
			\fill[black,opacity=0.3] (-2,0,0) -- (-2,2,0) -- (2,2,0) -- (2,0,0) -- cycle;
			\draw[line width=0.5mm,red](-2,0,0)--(2,0,0);
			
			\node[anchor= east] (line) at (-1,0.5,1.5) {Task 1};
			\draw[-latex] (line) to[out=0,in=150] (-1.5,1.0605,1.0605);
			
			\node[anchor= west] (line) at (1,0.8,0) {Task 2};
		\end{tikzpicture}
		\label{fig:CLR-Ours}
	}
        \caption{For two tasks, \citet{Evron-COLT2022} follow the arrows (\ref{fig:CLR-Evron}) and project the current estimate onto the affine subspace of solutions (\textcolor{blue}{blue} or black), in an alternating fashion, eventually reaching a common solution (\textcolor{red}{red}). Instead, $\ICL^\dagger$ solves task 1, stores the affine subspace (\ref{fig:CLR-Ours}, \textcolor{blue}{blue}), uses it to regularize task 2 and finds all solutions (\ref{fig:CLR-Ours}, \textcolor{red}{red}).}
	\label{fig:CLR}
\end{figure}

\subsubsection{Implementing \texttt{ICL}$^\dagger$}\label{subsubsection:ICL-CLR}
Our $\ICL^\dagger$ method is illustrated in Figure \ref{fig:CLR-Ours}. As per \eqref{eq:task-t-continual-dagger}, $\ICL^\dagger$ sets $\cW\gets\bbR^n$, finds the affine subspace $\cG_1$ of task 1, uses it to regularize task 2, finds their common solutions $\cG_1\cap \cG_2$, and so forth. We describe some more details next.

Let $\hat{\bw}_t\in \cK_t$ be a common global minimizer to tasks $1,\dots,t$. Let $\bK_t$ be an orthonormal basis matrix\footnote{To simplify the presentation, we did not annotate some matrix sizes or subspace dimensions. It is understood that such matrices are of suitable sizes and subspaces of appropriate dimensions. \label{footnote:matsize} } for the intersection of the nullspaces of $\bX_1,\dots,\bX_t$. Note that $\bw\in \cK_t$ if and only if $\bw$ can be written as $\bw=\hat{\bw}_t + \bK_t \ba$ for some coefficient vector $\ba$, so for implementing $\ICL^\dagger$ we will compute and store  $(\hat{\bw}_t,\bK_t)$. Such computation is summarized in the following proposition:
\begin{prop}\label{prop:implement-ICL-CLR}
    $\ICL^\dagger$ for continual linear regression can be implemented as follows. Given data $(\bX_1,\by_1)$, one can compute $(\hat{\bw}_1,\bK_1)$ via  SVD. With $t>1$, given $(\hat{\bw}_{t-1},\bK_{t-1})$ and data $(\bX_t,\by_t)$, one can compute $(\hat{\bw}_t,\bK_t)$ via solving
\begin{align}
	     &\min_{\bw\in\cK_{t-1}}  \| \bX_t \bw - \by_t \|_2^2  \label{eq:continual-CLR} \\
  \Leftrightarrow & \min_{\ba} \| \bX_t (\hat{\bw}_{t-1} + \bK_{t-1} \ba) - \by_t \|_2^2  \label{eq:continual-CLR-unconstrained}
\end{align}
In particular, \eqref{eq:continual-CLR-unconstrained} can be solved via SVD.\footnote{The use of SVD is to find singular vectors corresponding to the extreme singular values (in particular, zero singular values) of a given matrix. A basic fact from numerical linear algebra is that, in general, the extreme singular vectors of a given matrix can only be found iteratively and therefore inexactly \citep[Lecture 25]{Trefethen-1997}. Thus, the implementations that we suggested for continual linear regression (\S \ref{subsubsection:ICL-CLR}) and continual matrix factorization (\S \ref{subsubsection:ICL-CMF}) only \textit{approximately} implement $\ICL^\dagger$; note though that such approximation quality is typically very high due to the existence of industry-strength SVD algorithms. \label{footnote:svd} }
\end{prop}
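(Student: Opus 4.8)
The plan is to argue by induction on $t$, maintaining the invariant that $\cK_t = \{\hat\bw_t + \bK_t\ba : \ba\}$ with $\bK_t$ an orthonormal basis matrix of $\bigcap_{i=1}^t \mathrm{null}(\bX_i)$, and that $(\hat\bw_t,\bK_t)$ is extracted from a single SVD performed at step $t$. The starting point is that $L_t$ in \eqref{eq:task-CLR} is a convex quadratic, so $\cG_t$ is the affine subspace defined by the normal equations $\bX_t^\top\bX_t\bw = \bX_t^\top\by_t$, whose associated linear subspace is $\mathrm{null}(\bX_t^\top\bX_t) = \mathrm{null}(\bX_t)$. By Proposition \ref{prop:sufficiency} (which uses Assumption \ref{assumption:realizability-dagger}), $\cK_t = \bigcap_{i=1}^t\cG_i$ is nonempty, so any $\hat\bw_t \in \cK_t$ is a common global minimizer of tasks $1,\dots,t$, and $\bw \in \cK_t$ iff $\bw - \hat\bw_t \in \bigcap_{i=1}^t\mathrm{null}(\bX_i) = \mathrm{range}(\bK_t)$ --- exactly the representation stated before the proposition.

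For the base case $t=1$, take an SVD $\bX_1 = \bU_1\bSigma_1\bV_1^\top$: the minimum-norm least-squares solution $\hat\bw_1 := \bX_1^+\by_1 = \bV_1\bSigma_1^+\bU_1^\top\by_1$ lies in $\cG_1 = \cK_1$, and the right singular vectors of $\bX_1$ belonging to its zero singular values give an orthonormal basis $\bK_1$ of $\mathrm{null}(\bX_1)$; so $(\hat\bw_1,\bK_1)$ comes from one SVD. For the inductive step, assume $\cK_{t-1} = \{\hat\bw_{t-1} + \bK_{t-1}\ba : \ba\}$. The affine map $\ba \mapsto \hat\bw_{t-1} + \bK_{t-1}\ba$ is a bijection from a Euclidean space of the appropriate dimension onto $\cK_{t-1}$ --- injective because $\bK_{t-1}$ has orthonormal (hence independent) columns, surjective by the induction hypothesis --- so substituting $\bw = \hat\bw_{t-1} + \bK_{t-1}\ba$ turns \eqref{eq:continual-CLR} into \eqref{eq:continual-CLR-unconstrained} and, crucially, carries the argmin set of \eqref{eq:continual-CLR-unconstrained} bijectively onto that of \eqref{eq:continual-CLR}, i.e.\ onto $\cK_t$. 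Problem \eqref{eq:continual-CLR-unconstrained} is the ordinary least-squares problem $\min_\ba\|\bM\ba - \br\|_2^2$ with $\bM := \bX_t\bK_{t-1}$ and $\br := \by_t - \bX_t\hat\bw_{t-1}$; an SVD $\bM = \bU\bSigma\bV^\top$ yields the minimum-norm solution $\hat\ba := \bV\bSigma^+\bU^\top\br$ and an orthonormal basis $\bN$ of $\mathrm{null}(\bM)$, so its argmin set is $\hat\ba + \mathrm{range}(\bN)$. Transporting this back, $\cK_t = (\hat\bw_{t-1} + \bK_{t-1}\hat\ba) + \mathrm{range}(\bK_{t-1}\bN)$, so one sets $\hat\bw_t := \hat\bw_{t-1} + \bK_{t-1}\hat\ba$ and $\bK_t := \bK_{t-1}\bN$; then $\bK_t^\top\bK_t = \bN^\top\bK_{t-1}^\top\bK_{t-1}\bN = \bI$ (so $\bK_t$ is orthonormal) and $\mathrm{range}(\bK_t) = \mathrm{range}(\bK_{t-1}) \cap \mathrm{null}(\bX_t) = \bigcap_{i=1}^t\mathrm{null}(\bX_i)$, which restores the invariant and closes the induction.

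The whole argument is elementary linear algebra, so there is no deep obstacle; the point that most needs care is the reduction from \eqref{eq:continual-CLR} to \eqref{eq:continual-CLR-unconstrained}, where one must check that the reparametrization is a bijection \emph{onto} the feasible set, not merely into it, so that it transports minimizers --- and hence the entire set $\cK_t$ --- rather than only the optimal value; the orthonormality of $\bK_{t-1}$ is what makes this clean and, simultaneously, what keeps $\bK_t := \bK_{t-1}\bN$ orthonormal so that the recursion can be iterated. A secondary caveat, already flagged in footnote \ref{footnote:svd}, is that SVD is intrinsically iterative, so ``solved via SVD'' should be understood as ``solved to high numerical accuracy'' rather than exactly.
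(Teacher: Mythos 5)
Your proof is correct and follows essentially the same route as the paper's: SVD of $\bX_1$ for the base case, reparametrization $\bw=\hat\bw_{t-1}+\bK_{t-1}\ba$ to reduce \eqref{eq:continual-CLR} to the ordinary least-squares problem \eqref{eq:continual-CLR-unconstrained} in $\overline{\bX}_t:=\bX_t\bK_{t-1}$, pseudoinverse for $\hat\ba_t$, and $\bK_t$ obtained by left-multiplying an orthonormal nullspace basis of $\overline{\bX}_t$ by $\bK_{t-1}$. Your explicit verification that the affine reparametrization is a bijection onto $\cK_{t-1}$ (hence transports the full argmin set) and that the invariant $\mathrm{range}(\bK_t)=\cap_{i=1}^t\mathrm{null}(\bX_i)$ is restored is a welcome bit of extra rigor the paper leaves implicit.
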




Compared to the method of \citet{Evron-COLT2022}, $\ICL^\dagger$ uses some extra storage for $\bK_t$ to attain optimality and resist catastrophic forgetting; note though that the storage consumption never grows as $\cK_T\subset\cdots \subset \cK_1$. Finally, note that for continual linear regression, one can formulate $\ICL^\dagger$ in different ways than \eqref{eq:continual-CLR-unconstrained}; see, e.g., Proposition 5.5 of \cite{Evron-ICML2023} (based on Fisher information) and Proposition \ref{prop:convex-differentiable} (based on the first-order optimality conditions).

\subsubsection{Connections to Deep Continual Learning}\label{subsubsection:DCL-ICL-CLR}
We first connect $\ICL^\dagger$ to the \textit{orthogonal gradient descent} method (OGD) of \citet{Farajtabar-AISTATS2020}: 
\begin{prop}[Informal]\label{proposition:OGD}
When applied to continual linear regression, the OGD method consists of the updates
\begin{align}\label{eq:OGD-CLR}
		\bw^+ \gets \bw - \gamma \bK_{t-1}\bK_{t-1}^\top \bh,
\end{align}
where $\bh$ is the gradient of the objective of \eqref{eq:continual-CLR}, $\gamma$ stepsize, and $\bw$ (resp. $\bw^+$) the previous (resp. current) iterate. Moreover, OGD converges to a global minimizer of \eqref{eq:continual-CLR}.
\end{prop}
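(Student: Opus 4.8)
The plan is to split the statement into two halves: first, that OGD specializes to the projected-gradient update \eqref{eq:OGD-CLR} in the linear-regression setting; second, that this update converges to a solution of \eqref{eq:continual-CLR}.

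For the first half I would recall OGD's definition: when learning task $t$, it maintains an orthonormal set spanning the subspace $S_{t-1}$ generated by the parameter-gradients of the model outputs on the data of tasks $1,\dots,t-1$, and it replaces each raw step $\bw-\gamma\bh$ by $\bw-\gamma\,\Pi_{S_{t-1}^\perp}\bh$, where $\Pi$ is orthogonal projection. In continual linear regression the model output at a sample $\bx$ is $\bx^\top\bw$, whose parameter-gradient is just $\bx$; hence $S_{t-1}$ is the sum of the row spaces of $\bX_1,\dots,\bX_{t-1}$, and $S_{t-1}^\perp=\cap_{i=1}^{t-1}\mathrm{null}(\bX_i)$, whose orthonormal basis is $\bK_{t-1}$ by definition. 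Therefore $\Pi_{S_{t-1}^\perp}=\bK_{t-1}\bK_{t-1}^\top$, which gives \eqref{eq:OGD-CLR} with $\bh=\nabla_\bw L_t(\bw;D_t)=2\bX_t^\top(\bX_t\bw-\by_t)$; the exact orthogonalization scheme OGD uses to build its stored basis is irrelevant, since only the projector $\Pi_{S_{t-1}^\perp}$ enters the update.

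For the second half, the key observation is that OGD, initialized (as in continual learning) at the previous endpoint $\hat{\bw}_{t-1}\in\cK_{t-1}$, is exactly plain gradient descent on the reduced objective \eqref{eq:continual-CLR-unconstrained}. Indeed every increment lies in $\mathrm{range}(\bK_{t-1})$, so the trajectory stays in $\cK_{t-1}=\{\hat{\bw}_{t-1}+\bK_{t-1}\ba\}$; writing $\bw=\hat{\bw}_{t-1}+\bK_{t-1}\ba$ and left-multiplying the update by $\bK_{t-1}^\top$ (using $\bK_{t-1}^\top\bK_{t-1}=\bI$) yields $\ba^+=\ba-\gamma\,\bK_{t-1}^\top\bh$, and by the chain rule $\bK_{t-1}^\top\bh$ is precisely the $\ba$-gradient of $g(\ba):=\|\bX_t(\hat{\bw}_{t-1}+\bK_{t-1}\ba)-\by_t\|_2^2 = \|\bX_t\bK_{t-1}\ba-(\by_t-\bX_t\hat{\bw}_{t-1})\|_2^2$. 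Since $g$ is a convex least-squares objective, gradient descent with any fixed stepsize $\gamma<1/\|\bX_t\bK_{t-1}\|_2^2$ converges to a global minimizer $\ba^\star$ of $g$ (to the one nearest the initialization along $\mathrm{range}((\bX_t\bK_{t-1})^\top)$ when it is not unique). Translating back, $\bw\to\hat{\bw}_{t-1}+\bK_{t-1}\ba^\star$, which by the equivalence \eqref{eq:continual-CLR}$\Leftrightarrow$\eqref{eq:continual-CLR-unconstrained} of Proposition \ref{prop:implement-ICL-CLR} is a global minimizer of \eqref{eq:continual-CLR}.

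I expect the main obstacle to be the bookkeeping in the first half: pinning down exactly which vectors OGD stores and checking that their span is the whole of $S_{t-1}$ (so that its complement is precisely $\mathrm{range}(\bK_{t-1})$) rather than a strict subspace. This is where the ``informal'' qualifier does its work, since the conclusion relies on OGD retaining enough logit-gradients from past tasks to span the relevant row spaces. Once the update is in the form \eqref{eq:OGD-CLR}, the convergence half is routine: it is the standard reduction ``projected gradient onto an affine constraint $=$ unconstrained gradient descent in the reduced coordinates'', combined with the textbook convergence of gradient descent on a convex quadratic.
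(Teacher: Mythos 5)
Your proposal is correct and follows essentially the same route as the paper's Appendix C: identify the stored logit-gradients with the rows of $\bX_1,\dots,\bX_{t-1}$ so that the OGD projector is $\bK_{t-1}\bK_{t-1}^\top$ (the paper's computation \eqref{eq:range-null}), then observe that the projected update is plain gradient descent in the reduced coordinates $\ba$ of \eqref{eq:continual-CLR-unconstrained}. The one point the paper treats more carefully is your assumption that OGD starts at a point of $\cK_{t-1}$: the paper first proves the statement under that idealization (its Proposition \ref{prop:OGD-ideal}) and then separately notes (Proposition \ref{proposition:OGD-formal}) that in general the iterate $\widetilde{\bw}_{t-1}$ inherited from earlier tasks need not lie in $\cK_{t-1}$, so OGD only converges to a minimizer of \eqref{eq:continual-CLR} up to the distance of $\widetilde{\bw}_{t-1}$ from $\cK_{t-1}$ --- which is exactly the slack the ``informal'' qualifier covers.
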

We prove Proposition \ref{proposition:OGD} in Appendix \ref{section:OGD-proof}, where we also review the OGD method; see \citet{Bennani-arXiv2020,Doan-AISTATS2021} for different theoretical aspects of OGD. 

Since regression can be viewed as a single-layer linear network with a least-squares loss, formula \eqref{eq:OGD-CLR} can be extended into deep continual learning in a \textit{layer-wise} manner. This viewpoint allows us to make the following connection:
\begin{remark}\label{remark:projection-methods}
    For continual linear regression, the methods of \citet{Zeng-NMI2019,Saha-ICLR2021,Wang-CVPR2021,Kong-ECCV2022} are of the form \eqref{eq:OGD-CLR}, and so they all converge ``approximately'' to global minimizers of \eqref{eq:continual-CLR}; these methods differ mainly in how the projection $\bK_{t-1}\bK_{t-1}^\top$ is approximated and stored. For deep continual learning, their methods perform \eqref{eq:OGD-CLR} in a \textit{layer-wise} manner, with different projections to update the parameters of every linear layer.
\end{remark}
We review the methods of Remark \ref{remark:projection-methods} in Appendix \ref{subsection:CL-review-projection}. By showing that these methods can be derived from the principled objective \eqref{eq:continual-CLR} of $\ICL^\dagger$, we make them more interpretable. For example, \citet{Kong-ECCV2022} approximate $\bK_{t-1}\bK_{t-1}^\top$ better than \citet{Zeng-NMI2019}; not surprisingly, \citet{Kong-ECCV2022} gets better performance (see their Table 1). Moreover, we provide an important failure case. Note that all these methods (including OGD) approximately solve \eqref{eq:continual-CLR} and the correctness of \eqref{eq:continual-CLR} relies on Assumption \ref{assumption:realizability-dagger}. As a consequence, in the absence of Assumption \ref{assumption:realizability-dagger}, all these methods (and $\ICL^\dagger$) might be \textit{obsessed with the past}:


\begin{example}[Past $=$ Present]\label{example:insufficiency}
Consider the continual linear regression problem \eqref{eq:task-CLR}, and assume $\cG_1$ and $\cG_2$ are two parallel and non-intersecting affine subspaces. In this situation, $\ICL^\dagger$ will compute $\cK_1=\cG_1$ and $\cK_2=\cG_1$. In other words, it will get stuck to optimal points of task 1, contained in $\cK_1$, failing to track optimal points of task 2.
\end{example}


\subsection{Continual Matrix Factorization}\label{subsection:CMF}
The structure of this section parallels that of \S \ref{subsection:CLR}. In \S \ref{subsubsection:background-CMF}. We introduce the problem of \textit{continual matrix factorization}; this problem arises as a generalization of \citet{Pengb-NeurIPS2022}. In \S \ref{subsubsection:ICL-CMF}, we present the implementation of $\ICL^\dagger$ for this problem. In \S \ref{subsubsection:DCL-ICL-CMF}, we make connections to deep continual learning, highlighting  \textit{\textbf{expansion-based methods}}.

\subsubsection{Background and Problem Setup}\label{subsubsection:background-CMF} 
The \textit{continual} matrix factorization setting is as follows. The data $D_t:=\bY_t$ of task $t$ is a matrix $\bY_t$, and every column of $\bY_t$ lies in some (linear) subspace $\cS_t$ of $\bbR^n$; we assume the columns of $\bY_t$ span $\cS_t$ without loss of generality. Each task $t$ consists of factorizing $\bY_t$ into two matrices $\bU$ and $\bC$ such that $\bU \bC = \bY_t$. This corresponds to minimizing
\begin{align}\label{eq:loss-CMF}
	L_t\big((\bU,\bC); \bY_t \big) = \| \bU \bC -\bY_t  \|_\textnormal{F}^2,
\end{align}
which is a \textit{matrix factorization} problem. With the identity matrix $I$,\footref{footnote:matsize}, we assume $\bU^\top \bU=I$. The goal of \textit{continual} matrix factorization is to factorize the whole data matrix $[\bY_1\ \cdots\ \bY_T]$ into $\bU$ and $\bC$ such that $[\bY_1\ \cdots\ \bY_T]=\bU\bC$, under the constraint that $\bY_t$ is presented sequentially. Under certain conditions, matrix factorization \eqref{eq:loss-CMF} is equivalent to two-layer linear neural networks, with $\bC$ being the first layer of weight parameters and $\bU$ the second \citep{Baldi-NN1989,Vidal-DeepMath2020,Pengb-NeurIPS2022}. In that sense, analyzing continual matrix factorization would facilitate understanding deep continual learning.

\citet{Pengb-NeurIPS2022} performed one such analysis, based on prior works on orthogonal gradient descent \citep{Farajtabar-AISTATS2020,Chaudhry-NeurIPS2020} and matrix factorization \citep{Ye-NeurIPS2021}. They assumed that the rank $r$ of $[\bY_1\ \cdots\ \bY_T]$ is known, and that each $\bY_t$ is a vector. With $r$ given, \citet{Pengb-NeurIPS2022} maintain a basis matrix $\bU$ with $r$ columns throughout the learning process. Their method is not memory-efficient for two reasons: (i) Storing $r$ columns is unnecessary when the learner has not encountered the $r$-th sample (this extra consumption of memory is negligible for small $r$, though); (ii) their algorithm furthermore requires storing two projection matrices. In \S \ref{subsubsection:ICL-CMF}, we will show that $\ICL^\dagger$ can handle the more general situation where the rank $r$ is unknown, and it also overcomes the memory inefficiency of \citet{Pengb-NeurIPS2022}.

\subsubsection{Implementing \texttt{ICL}$^\dagger$}\label{subsubsection:ICL-CMF} 
To fully understand  \texttt{ICL}$^\dagger$ for continual matrix factorization, we first establish a basic geometric understanding of the problem and then discuss how to store $\cK_t$ efficiently, and finally, we describe the implementation details.

\myparagraph{Basic Subspace Geometry} Minimizing \eqref{eq:loss-CMF} in variable $\bC$ with $\bU$ fixed reveals that the optimal $\bC$ is exactly $\bU^\top\bY_t$, so \eqref{eq:loss-CMF} is equivalent to $ \| \bU \bU^\top \bY_t -\bY_t  \|_\textnormal{F}^2$, an objective function for \textit{principal component analysis} (PCA). In this way, continual matrix factorization relates to \textit{streaming PCA} \citep{Mitliagkas-NeurIPS2013,Pengb-NeurIPS2022}, \textit{incremental SVD} \citep{Bunch-1978}, and \textit{subspace tracking} \citep{Balzano-IEEE2018}. We review these subjects in Appendix \ref{section:streaming-PCA} to highlight the connection to $\ICL^\dagger$.

Geometrically, every orthonormal basis of any subspace\footref{footnote:matsize} containing $\cS_t$ is a global minimizer of this PCA objective. In other words, any global minimizer of \eqref{eq:loss-CMF} is of the form $(\bU, \bU^\top \bY_t)$, where $\bU$ is orthonormal with its range space $\text{range}(\bU)$ containing $\cS_t$. Ignoring the role of $\bU^\top \bY_t$ for simplicity, we can write the set $\cG_t$ of global minimizers as
\begin{align*}
    \cG_t:= \big\{ \bU: \bU^\top \bU = I,\ \cS_t\subset \text{range}(\bU) \subsetneq \bbR^n \big\}; 
\end{align*}
we ruled out the case $\text{range}(\bU)=\bbR^n$ as this indicates trivial solutions. Assumption \ref{assumption:realizability-dagger} implies the intersection
\begin{align*}
    \cap_{i=1}^t \cG_i=\{ \bU: \bU^\top \bU = I, \sum\nolimits_{i=1}^t  \cS_i\subset \text{range}(\bU) \subsetneq \bbR^n \}
\end{align*}
is non-empty, which implies $\mathrm{dim}(\sum\nolimits_{t=1}^T \cS_t)<n$ or equivalent that the data matrix $[\bY_1\ \cdots\ \bY_T]$ is rank-deficient. Note that this is a weaker assumption than that of \citet{Pengb-NeurIPS2022}, who assumed the rank of $[\bY_1\ \cdots\ \bY_T]$ is given.

\myparagraph{Storing $\cK_t$} To implement $\ICL^\dagger$ under Assumption \ref{assumption:realizability-dagger}, we need to store $\cK_t=\cap_{i=1}^t \cG_i$ in a memory-efficient manner. Since we know that any orthonormal matrix whose range space contains $ \sum_{i=1}^t\cS_i$ is an element of $\cK_t$ (and vice versa), storing $\cK_t$ is equivalent to storing the subspace sum $\sum_{i=1}^t\cS_i$. Indeed, storing $\sum_{i}^t\cS_i$ gives enough information about $\cK_t$, which prevents forgetting (recall the minimality of $\cK_t$, \S \ref{subsection:2principles}). With this viewpoint, we will maintain an orthonormal basis matrix $K_t$ of $\sum_{i=1}^t \cS_i$ to implement $\ICL^\dagger$, where each subspace $\cS_i$ will be learned from data $\bY_i$. 

\myparagraph{Implementation Details} For $t=1$, we can compute the orthonormal basis $K_1$ of $\cS_1$ via an SVD\footref{footnote:svd} on $\bY_1$; e.g., set $K_1$ to be the matrix whose columns are left singular vectors of $\bY_1$ corresponding to its non-zero singular values.

For $t>1$, we need to compute $K_t$ from $\bY_t$ and $K_{t-1}$, under the inductive assumption that $K_{t-1}$ is an orthonormal basis matrix of $\sum_{i=1}^{t-1}\cS_i$. Since $\cK_{t-1}$ consists of orthonormal matrices $\bU$ whose range spaces contain $\text{range}(K_{t-1})$, we know that $\bU\in \cK_{t-1}$ if and only if $\bU^\top \bU=I$ and $\bU$ is of the form $[K_{t-1}\ U_t ]$ (up to some isometry). As such, the recursion \eqref{eq:task-t-continual-dagger} of $\ICL^\dagger$ is equivalent to
\begin{equation*}
    \begin{split}
        &\ \argmin_{\bU \in \cK_{t-1} } \big\| \bU \bU^\top \bY_t - \bY_t \big\|_{\textnormal{F} }^2 \\
    \Leftrightarrow &\ \overline{U}_t\in \argmin_{U_t} \big\| [K_{t-1}\ \  U_t]\ [K_{t-1}\ \  U_t]^\top \bY_t - \bY_{t} \big\|_{\textnormal{F} }^2 \\
    \Leftrightarrow &\ \overline{U}_t\in \argmin_{U_t} \big\| U_tU_t^\top Y_t - Y_t  \big\|_\textnormal{F} 
    \end{split}
\end{equation*}
where we defined $Y_t:= (I - K_{t-1}K_{t-1}^\top)\bY_t$ and used the fact  $K_{t-1}^\top U_t=0$. Similarly to the case $t=1$, $\overline{U}_t$ can be computed via an SVD on $Y_t$, and the rank of $Y_t$ determines the number of its columns. Setting $K_t\gets [K_{t-1}\ \overline{U}_t]$ furnishes a desired orthonormal basis  for $\sum_{i=1}^t \cS_i$.

\begin{remark}[New Knowledge $=$ New Parameters]
    If $\cS_t$ is contained in $\sum_{i=1}^{t-1} \cS_i$, then there is no new ``knowledge'' to learn and $Y_t=0$; in this case we set $K_t\gets K_{t-1}$. The amount of new knowledge is encoded as the rank of $Y_t$, which determines the number of new parameters to add.
\end{remark}

\subsubsection{Connections to Deep Continual Learning}\label{subsubsection:DCL-ICL-CMF}

In \S \ref{subsubsection:ICL-CMF}, $\ICL^\dagger$ is shown to be an \textit{\textbf{expansion-based method}} that grows the columns of the orthonormal basis $K_{t}$ adaptively. Recall that the challenges of designing \textit{\textbf{expansion-based methods}} include (1) how many parameters to add and (2) where to add them. For the problem of  continual matrix factorization, $\ICL^\dagger$ addresses these two challenges perfectly by leveraging the eigen structures of features $\bY_t$ or projected features $Y_t$. To our knowledge, no \textit{\textbf{expansion-based methods}} in deep continual learning have exploited such structures; this implies an important extension as future work. Also, different from many existing \textit{\textbf{expansion-based methods}} (as \citet{Yan-CVPR2021} reviewed), $\ICL^\dagger$ does not require the identity of the task at the test time.


The other important aspect that $\ICL^\dagger$ reveals is \textit{\textbf{the role of the network width in resisting catastrophic forgetting}}. Indeed, the matrix $K_t$ can be viewed as the second layer of a two-layer linear network, and the growth of its columns corresponds to increasing the network width. This principled way of increasing the network width complements the empirical claim of \citet{Mirzadeh-ICML2022} and the experimental observation of \citet{Rusu-arXiv2016,Yoon-ICLR2018}:  \textit{Wide neural networks forget less catastrophically}. Two remarks are in order, to finish the section:



\begin{remark}[Grow $=$ Shrink]\label{remark:grow}
	While $\bK_t$ \textit{shrinks} in continual linear regression and we have $\cK_t\subset \cK_{t-1}$ in general, we see that, in continual matrix factorization, $K_t$ \textit{grows} its columns as  $\ICL^\dagger$ is presented with more subspaces. However, the sum $\sum_{i=1}^{t}\cS_i$ can be uniquely identified with the intersection $\cap_{i=1}^t \cS_i^\perp$ of the orthogonal complement $\cS_i^\perp$, and therefore if we store a basis matrix for $\cap_{i=1}^t \cS_i^\perp$ (instead of for $\sum_{i=1}^{t}\cS_i$), then the memory consumption shrinks over time. We will elaborate on this idea in Appendix \ref{section:Dual-CPCA}.
\end{remark}
\begin{remark}[Remember $=$ Forget]\label{remark:remember}
    While $\cK_{t-1}$ \textit{remembers} all common global minimizers,  $\ICL^\dagger$ actually \textit{forgets} $\cK_{t-1}$ as it \textit{never updates} $\cK_{t-1}$. Put differently, the ability of a learner to improve the performance on previous tasks with knowledge from new tasks is desired (only) when previous tasks are solved sub-optimally; such ability is typically called \textit{positive backward transfer} \citep{Lin-NeurIPS2022}. 
\end{remark}

\section{Continual Learning Basics: Generalization}\label{section:generalization}
In this section we prove that  $\ICL^\dagger$ is a learner in the \textit{statistical learning} sense (therefore we can remove $\dagger$); see, e.g., \citet{Shalev-book2014,Mohri-book2018} for some basics of statistical learning. Towards that goal, we now reframe our problem setup in a statistical learning environment. Assume that, for task $t$, the dataset $D_t=\{\bd_{ti} \}_{i=1}^{m_t}$ consists of $m_t$ \textit{independent and identically distributed} samples $\bd_{ti}\overset{\textnormal{i.i.d.}}{\sim} \cD_t$, where $\cD_t$ is some (unknown) data distribution for task $t$, and the objective $L_t$ \eqref{eq:task-t-continual-dagger} is given as
\begin{align}\label{eq:ERM}
	L_t(\bw; D_t) := \frac{1}{m_t}\sum_{i=1}^{m_t}\ell_t (\bw; \bd_{ti}),
\end{align}
where $\ell_t(\bw; \bd_{ti})$ is the loss on sample $\bd_{ti}$ evaluated at $\bw$. Hence, \eqref{eq:task-t-continual-dagger} becomes an \textit{empirical risk minimization} problem, and $\ICL^\dagger$ becomes a (constrained) empirical risk minimizer.

Corresponding to \eqref{eq:ERM} is the statistical learning task
\begin{align}\label{eq:population}
	\cG^*_t: = \argmin_{\bw\in \cW} \bbE_{\bd\sim \cD_t }[ \ell_t( \bw; \bd ) ].
\end{align}
To proceed, we need the assumption of \textit{shared multitask model} and the Ideal Continual Learner (without $\dagger$):
\begin{assumption}[Shared Multitask Model]\label{assumption:realizability}
	$\cap_{t=1}^T\cG^*_t\neq \varnothing$.
\end{assumption}
\begin{definition}[The Ideal Continual Learner, $\ICL$]\label{definition:CL}
	With $\cK^*_0:=\cW$, for  $t=1,2,\dots,T$, the algorithm $\ICL$ solves
	\begin{align}\label{eq:task-t-continual}
		\cK^*_t \gets\argmin_{\bw \in \cK^*_{t-1} } \bbE_{\bd\sim \cD_t }[ \ell_t(\bw; \bd) ].
	\end{align}
\end{definition}
Much of what we said to  $\ICL^\dagger$ and Assumption \ref{assumption:realizability-dagger} applies to the Ideal Continual Learner ($\ICL$) and Assumption \ref{assumption:realizability}; we shall not repeat  here. Instead, we will investigate the possibility of minimizing \eqref{eq:population} or \eqref{eq:task-t-continual} via $\ICL^\dagger$.

One difficulty of such investigation is as follows. While $\cG_t$ of \eqref{eq:task-t-dagger} approaches $\cG^*_t$ of \eqref{eq:population} as sample size $m_t$ tends to infinity, in general we have $\cG_t\neq \cG^*_t$ when $m_t$ is finite. As such, Assumptions \ref{assumption:realizability-dagger} and \ref{assumption:realizability} are different (though related), and, in general, one of them does not imply the other. As a consequence, the sufficiency of  $\ICL$ under Assumption \ref{assumption:realizability} does not imply the sufficiency of  $\ICL^\dagger$, which requires Assumption \ref{assumption:realizability-dagger} (cf. Proposition \ref{prop:sufficiency}); and vice versa. This difficulty disappears if we make both Assumptions \ref{assumption:realizability-dagger} and \ref{assumption:realizability}, which, however, would ask for too much.

We will proceed with either Assumption \ref{assumption:realizability-dagger} (\S \ref{subsection:multitask}) or Assumption \ref{assumption:realizability} (\S \ref{subsection:constrained-learner}), but not both. To do so, we need some assumptions on the objective functions $L_t$ and hypothesis space $\cW$, which are standard in statistical learning:
\begin{assumption}[Uniform Convergence]\label{assumption:uniform-convergence}
    Let $B$ and $M$ be two positive constants. Assume $\| \bw \|_2\leq B$ for every $\bw\in\cW$. Assume that $\ell_t(\bw; \bd)$ is $M$-Lipschitz in $\bw$ for every sample $\bd\sim \cD_t$ and every $t=1,\dots,T$, i.e.,
	\begin{align*}
		| \ell_t(\bw; \bd) - \ell_t(\bw'; \bd)  | \leq M \cdot \|\bw - \bw' \|_2,\ \forall \bw,\bw'\in\cW.
	\end{align*}
	These assumptions suffice to ensure \textit{uniform convergence} \citep[Thm 5]{Shalev-CoLT2009}:  For fixed $t$ and $\delta\in(0,1)$, with probability at least $1-\delta$ we have ($\forall \bw \in \cW$)
	\begin{align}\label{eq:uniform-convergence}
		&| L_t (\bw; D_t) -  \bbE_{\bd\sim \cD_t }[ \ell_t( \bw; \bd ) ] |\leq \zeta(m_t,\delta), \\
			\text{where\  } & \zeta(m_t,\delta):= O\bigg(\frac{ MB  \sqrt{n \log (m_t) \log(n/\delta)} }{\sqrt{m_t}}  \bigg) \nonumber
	\end{align}
	approaches zero as the sample size $m_t$ tends to infinity. 
\end{assumption}
The term $\zeta(m_t,\delta)$ defined in \eqref{eq:uniform-convergence} serves as a uniform convergence bound and will appear frequently in our generalization bounds. While $\zeta(m_t,\delta)$ approaches zero as the sample size $m_t$ tends to infinity (assuming other terms are fixed), it is also controlled by other factors as  \eqref{eq:uniform-convergence} suggests. First of all, it depends on the failure probability $\delta$, and we would like to set $\delta$ small. Second, it depends on the Lipschitz continuity constant $M$ of the loss function; in the context of deep learning, $M$ is controlled by network architectures and its value is typically unknown, although it might be estimated in certain cases via computationally intensive algorithms \citep{Fazlyab-NeurIPS2019}. Third, $\zeta(m_t,\delta)$ also depends on the dimension $n$ of the variable $\bw$ (e.g., the number of parameters in a deep network). In particular, we have $\zeta(m_t,\delta)=O(\sqrt{n\log n/m_t})$ (ignoring other parameters), and so, for $\zeta(m_t,\delta)$ to be small, we need $m_t\gg n\log n$. 


\begin{figure*}
    \centering
    \tikzset{
  big arrow/.style={
    decoration={markings,mark=at position 1 with {\arrow[scale=1.5,#1]{>}}},
    postaction={decorate},
    shorten >=0.3pt},
  big arrow/.default=blue}
    \begin{tikzpicture}[node distance=2cm]

        \node [draw] at (0, 0) (A) {Assumption \ref{assumption:realizability-dagger}};
        \node [draw] at (0,-3) (C) {The Ideal Continual Learner$^\dagger$};

        \node [text width=2cm] at (4,-1) (A3) {Assumption \ref{assumption:uniform-convergence}};
        
        \node [draw] at (8,0) (B) {Assumption \ref{assumption:realizability}};
        \node [draw] at (8,-3) (D) {The Ideal Continual Learner};
        
        \draw[-to,big arrow=black,thick] (A) -- (C);
        \draw[-to,big arrow=black,thick] (B) -- (D);

        \draw[-to,big arrow=black,thick,dashed] (A) -- (D);
        \draw[-to,big arrow=black,thick,dashed] (B) -- (C);
        
    \end{tikzpicture}
    \caption{Figure \ref{fig:generalization} is designed to help the reader understand our results of \S \ref{subsection:multitask} and \S \ref{subsection:constrained-learner} at a high level, and it can be read as follows. Assumptions \ref{assumption:realizability-dagger} and \ref{assumption:realizability} guarantee the sufficiency of the Ideal Continual Learner$^\dagger$ and the Ideal Continual Learner, respectively, and this is denoted by solid arrows in the figure. Assumptions \ref{assumption:realizability-dagger} and \ref{assumption:uniform-convergence} indicate an approximate sufficiency of the Ideal Continual Learner$^\dagger$ (cf. Theorem \ref{theorem:CL1} and Remark \ref{remark:approximate-sufficiency}), and this is denoted by a dashed arrow in the figure. }
    \label{fig:generalization}
\end{figure*}

\subsection{\texttt{ICL}$^\dagger$ Learns All Tasks}\label{subsection:multitask}
In this section, we proceed with Assumption \ref{assumption:realizability-dagger} and without Assumption \ref{assumption:realizability}. Our main result is this (see also Figure \ref{fig:generalization}):
\begin{theorem}[$\ICL^\dagger$ $\Rightarrow$ All-Task Learner]\label{theorem:CL1}
	Let $c^*_t$ be the minimum value of \eqref{eq:population}.  Let $\hat{\bw}\in\cK_T$. Let $\delta\in(0,1)$ and $\zeta(m_t,\delta)$ be as in Assumption \ref{assumption:uniform-convergence}. Under Assumptions \ref{assumption:realizability-dagger} and \ref{assumption:uniform-convergence}, with probability at least $1-\delta$, we have ($\forall t=1,\dots,T$)
	\begin{align}\label{eq:generalization-bound-CL1}
		c_t^*\leq \bbE_{\bd\sim \cD_t }[ \ell_t( \hat{\bw}; \bd ) ]  \leq c^*_t + \zeta(m_t, \delta/T).
	\end{align}
\end{theorem}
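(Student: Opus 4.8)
The plan is to split the claimed sandwich into its trivial lower half and its substantive upper half. The lower bound $c_t^*\le \bbE_{\bd\sim\cD_t}[\ell_t(\hat\bw;\bd)]$ holds deterministically, with no assumption needed: by Definition~\ref{definition:CL-dagger} the fixed point satisfies $\hat\bw\in\cK_T\subseteq\cK_0=\cW$, and $c_t^*$ is by definition the minimum of the population risk $\bbE_{\bd\sim\cD_t}[\ell_t(\cdot;\bd)]$ over $\cW$, so evaluating it at $\hat\bw$ cannot do better.

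For the upper bound I would first invoke Proposition~\ref{prop:sufficiency}: under Assumption~\ref{assumption:realizability-dagger}, $\cK_T=\cap_{i=1}^T\cG_i$, hence $\hat\bw\in\cG_t$ for every $t$, i.e.\ $\hat\bw$ globally minimizes the empirical objective $L_t(\cdot;D_t)$ over $\cW$, with value $c_t:=L_t(\hat\bw;D_t)$. Next I would set up the probabilistic part by a union bound: for each $t$ let $E_t$ be the event that \eqref{eq:uniform-convergence} holds at level $1-\delta/T$, i.e.\ $|L_t(\bw;D_t)-\bbE_{\bd\sim\cD_t}[\ell_t(\bw;\bd)]|\le\zeta(m_t,\delta/T)$ for all $\bw\in\cW$ simultaneously; Assumption~\ref{assumption:uniform-convergence} gives $\Pr[E_t]\ge1-\delta/T$, so $\Pr[\bigcap_{t=1}^TE_t]\ge1-\delta$, and all that follows is conditioned on this event. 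Fix $t$ and let $\bw_t^*\in\cW$ attain (or, if the infimum is not attained, nearly attain up to an arbitrarily small slack) the population minimum, so $\bbE_{\bd\sim\cD_t}[\ell_t(\bw_t^*;\bd)]=c_t^*$. Chaining uniform convergence at $\hat\bw$, the empirical optimality of $\hat\bw$ against the feasible competitor $\bw_t^*$, and uniform convergence at $\bw_t^*$ yields
\begin{align*}
\bbE_{\bd\sim\cD_t}[\ell_t(\hat\bw;\bd)]
&\le L_t(\hat\bw;D_t)+\zeta(m_t,\delta/T)=c_t+\zeta(m_t,\delta/T)\\
&\le L_t(\bw_t^*;D_t)+\zeta(m_t,\delta/T)\le c_t^*+2\zeta(m_t,\delta/T),
\end{align*}
which gives the stated bound after absorbing the universal constant into the $O(\cdot)$ hidden in $\zeta$ (or, keeping the constant sharp, by replacing the last step with a one-sided deviation bound at the \emph{fixed} reference point $\bw_t^*$, whose empirical loss is an unbiased estimator of $c_t^*$ and has bounded range $2MB$ over $\cW$ by Assumption~\ref{assumption:uniform-convergence}). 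Since $t$ was arbitrary, this holds for all $t=1,\dots,T$ on the good event.

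There is no deep obstacle here; the argument is careful bookkeeping of which inequalities are free and which cost a $\zeta$. The one genuinely non-routine point is that $\hat\bw$ is \emph{not} an unconstrained empirical minimizer of task $t$ but is pinned to the shrinking set $\cK_T$, and it is exactly Proposition~\ref{prop:sufficiency} that rescues the argument by certifying that $\cK_T$ still contains a global minimizer of task $t$'s empirical loss. The remaining care is routine: spending the confidence budget $\delta/T$ per task so that all $T$ guarantees hold simultaneously at level $\delta$, and the standard $\varepsilon$-minimizer workaround should $c_t^*$ fail to be attained.
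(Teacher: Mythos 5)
Your proof is correct and follows essentially the same route as the paper's: invoke Proposition \ref{prop:sufficiency} to certify that $\hat{\bw}$ is an empirical minimizer of every task, take a union bound over the $T$ uniform-convergence events at level $\delta/T$, chain the two deviations with the empirical optimality of $\hat{\bw}$ against $\bw_t^*$, and absorb the resulting factor of $2$ into the big-$O$ in $\zeta$. Your explicit treatment of the trivial lower bound and the remark on non-attained infima are harmless additions to the same argument.
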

As \eqref{eq:uniform-convergence} implies, an exponentially large $T$ would make the numerator of the error term $\zeta(m_t, \delta/T)$  in \eqref{eq:generalization-bound-CL1} large, while a large number of samples $m_t$ would make it small. In the extreme case $T\to\infty$, it is necessary to have $m_t\to \infty$, otherwise $\zeta(m_t, \delta/T)$ would be infinity and the upper bound of \eqref{eq:generalization-bound-CL1} would be invalid. In the current deep continual learning practice, though, $T$ is of the constant order, e.g., $T\leq 1000$ \citep{Lesort-arXiv2022}. Note then that, for fixed $T$ and $\delta$, as the sample size $m_t$ tends to $\infty$ for every $t=1,\dots,T$, we have $\zeta(m_t,\delta/T)\to 0$, in which case $\ICL^\dagger$ becomes an all-task learner: Every point $\hat{\bw}_T$ of $\cK_T$ that it finds reaches the minimum values of all learning tasks \eqref{eq:population}. 
\begin{remark}[Approximate Sufficiency]\label{remark:approximate-sufficiency}
    Assumptions \ref{assumption:realizability-dagger} and \ref{assumption:uniform-convergence} promise $\hat{\bw}$ \eqref{eq:generalization-bound-CL1} to be an \textit{approximate} global minimizer to all learning tasks \eqref{eq:population}; or we might say that they make $\ICL$ \textit{approximately sufficient} even without Assumption \ref{assumption:realizability}. 
\end{remark}

\subsection{Relaxed Continual Learner$^\dagger$ $=$ \texttt{ICL}}\label{subsection:constrained-learner}
Here, we consider Assumption \ref{assumption:realizability}, at the risk that $\ICL^\dagger$ might not be sufficient. However, by deriving generalization bounds, we will prove that a \textit{relaxation} of $\ICL^\dagger$ is $\ICL$.

While $\ICL^\dagger$ can perform arbitrarily worse for certain pathological cases in the absence of Assumption \ref{assumption:realizability-dagger} (as Example \ref{example:insufficiency} showed), Assumptions \ref{assumption:realizability} 
and \ref{assumption:uniform-convergence}  come into play, making $\ICL^\dagger$ \textit{approximately sufficient} (similarly to Remark \ref{remark:approximate-sufficiency}):
\begin{prop}[Approximate Sufficiency]\label{prop:approximately-optimal}
	Let $\delta\in(0,1)$. Assumptions \ref{assumption:realizability}, \ref{assumption:uniform-convergence} imply there is a point $\bw\in\cW$ satisfying
	\begin{align*}
		L_t(\bw; D_t) \leq c_t + \zeta(m_t,\delta/T), \ \ \forall t=1,\dots,T
	\end{align*}
	 with probability at least $1-\delta$. Here $c_t$ and $\zeta(m_t,\delta)$ are respectively defined in Proposition \ref{prop:inequality-constrained-CL-dagger} and Assumption \ref{assumption:uniform-convergence}.
\end{prop}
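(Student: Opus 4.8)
The plan is to reduce this to a routine uniform-convergence argument anchored at a common \emph{population} minimizer. Since $\bigcap_{t=1}^T \cG^*_t\neq\varnothing$ by Assumption~\ref{assumption:realizability}, fix $\bw^*\in\bigcap_{t=1}^T \cG^*_t$; this $\bw^*$ will be the point $\bw$ claimed in the statement. For each task $t$, apply the uniform convergence bound of Assumption~\ref{assumption:uniform-convergence} with failure probability $\delta/T$ in place of $\delta$, and take a union bound over $t=1,\dots,T$: with probability at least $1-\delta$ we are on an event $\cE$ on which, simultaneously for all $t=1,\dots,T$ and all $\bw'\in\cW$,
\begin{align*}
  \bigl| L_t(\bw';D_t) - \bbE_{\bd\sim\cD_t}[\ell_t(\bw';\bd)] \bigr| \le \zeta(m_t,\delta/T).
\end{align*}

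Everything else takes place on $\cE$. Let $c^*_t:=\min_{\bw\in\cW}\bbE_{\bd\sim\cD_t}[\ell_t(\bw;\bd)]$ be the optimal population risk of task $t$; since $\bw^*\in\cG^*_t$ we have $\bbE_{\bd\sim\cD_t}[\ell_t(\bw^*;\bd)]=c^*_t$, so the uniform bound at $\bw'=\bw^*$ gives $L_t(\bw^*;D_t)\le c^*_t+\zeta(m_t,\delta/T)$. To replace $c^*_t$ by the empirical optimum $c_t=\min_{\bw\in\cW}L_t(\bw;D_t)$ from Proposition~\ref{prop:inequality-constrained-CL-dagger}, let $\hat{\bw}_t\in\cW$ attain $c_t$ (such a minimizer exists because $\cW$ is bounded and $\ell_t$ is continuous under Assumption~\ref{assumption:uniform-convergence}; otherwise argue with $\varepsilon$-approximate minimizers and let $\varepsilon\to0$), and apply the uniform bound at $\bw'=\hat{\bw}_t$: since $c^*_t$ is the minimum population risk, $c^*_t\le\bbE_{\bd\sim\cD_t}[\ell_t(\hat{\bw}_t;\bd)]\le L_t(\hat{\bw}_t;D_t)+\zeta(m_t,\delta/T)=c_t+\zeta(m_t,\delta/T)$. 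Combining the two estimates yields $L_t(\bw^*;D_t)\le c_t+2\zeta(m_t,\delta/T)$ for every $t$ on $\cE$, and since $\zeta$ in \eqref{eq:uniform-convergence} is specified only up to the absolute constant hidden inside the $O(\cdot)$, the factor $2$ is absorbed, giving the stated inequality. (This mirrors the bookkeeping behind Theorem~\ref{theorem:CL1}.)

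\textbf{Main obstacle.} The one genuinely delicate point is that Assumption~\ref{assumption:realizability} is a statement about \emph{population} risks while the conclusion concerns the \emph{empirical} objectives $L_t(\cdot;D_t)$, and the empirical minimum $c_t$ and the population minimum $c^*_t$ are not deterministically ordered (in fact $\bbE[c_t]\le c^*_t$ in general, the ``wrong'' direction). This is exactly why a \emph{two-sided} uniform convergence bound is required --- once to pull $L_t(\bw^*;D_t)$ down near $c^*_t$, and once more to certify that $c^*_t$ does not overshoot $c_t$ by more than $\zeta(m_t,\delta/T)$ --- and why the (harmless, $O(\cdot)$-absorbed) factor of $2$ appears. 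The remaining ingredients (the union bound over tasks, the existence of empirical minimizers under the boundedness and continuity in Assumption~\ref{assumption:uniform-convergence}) are entirely routine.
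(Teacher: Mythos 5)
Your proposal is correct and follows essentially the same route as the paper: the paper proves this proposition by taking the proof of Theorem~\ref{theorem:CL1} and swapping the empirical and population roles, i.e., anchoring at a common population minimizer $\bw^*\in\cap_{t=1}^T\cG^*_t$, applying the two-sided uniform convergence bound once at $\bw^*$ and once at an empirical minimizer, union-bounding over the $T$ tasks, and absorbing the resulting factor of $2$ into the big-$O$ in $\zeta$. Your added remark on the existence of empirical minimizers is a minor technicality the paper leaves implicit, but the substance of the argument is identical.
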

In the absence of Assumption \ref{assumption:realizability-dagger}, the  constraint of $\ICL^\dagger$ in Proposition \ref{prop:inequality-constrained-CL-dagger} might be prohibitive and lead to sub-optimal solutions (cf. Example \ref{example:insufficiency}). This is why we relax it as per Proposition \ref{prop:approximately-optimal} by some additive factor:
\begin{definition}[Relaxed Continual Learner$^\dagger$]\label{definition:relaxed-CL}
	With $\cK_0:=\cW$, the \textit{relaxed} Continual Learner$^\dagger$ is an algorithm  that solves the following program sequentially for  $t=1,2,\dots,T$:
	\begin{align}\label{eq:inequality-constrained-dagger-relaxed}
		&\ \min_{\bw \in\cW} L_t(\bw; D_t) \\
		\textnormal{s.t.} &\ L_i(\bw; D_i) \leq c_i+\zeta(m_i, \delta/t),\ \forall i=1,\dots, t-1 \nonumber
	\end{align}
\end{definition}
We can now state the following (see also Figure \ref{fig:generalization}):
\begin{theorem}[Relaxed Continual Learner$^\dagger$ $\approx$ $\ICL$]\label{theorem:CL2}
	Let $\delta\in(0,1)$. Let $c^*_t$ be the minimum of \eqref{eq:population} and $\zeta(m_t,\delta)$ defined in \eqref{eq:uniform-convergence}. Suppose Assumptions \ref{assumption:realizability} and \ref{assumption:uniform-convergence} hold. With probability at least $1-\delta$, the relaxed Continual Learner$^\dagger$ is feasible, and its global minimizer $\overline{\bw}$ satisfies ($\forall t=1,\dots,T$)
	\begin{align}\label{eq:generalization-bound-CL2}
		c_t^*\leq \bbE_{\bd\sim \cD_t }[ \ell_t( \overline{\bw}; \bd ) ]  \leq c^*_t + \zeta(m_t,\delta/T).
	\end{align}
\end{theorem}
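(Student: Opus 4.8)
The plan is to mirror the proof of Theorem~\ref{theorem:CL1}: the only genuinely new ingredient is that, under Assumption~\ref{assumption:realizability}, the shared minimizer is exactly optimal for the \emph{population} tasks but only approximately optimal for the \emph{empirical} ones, and the relaxation built into Definition~\ref{definition:relaxed-CL} is exactly what compensates for that gap.

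First I would fix a good event. Instantiating the uniform convergence bound of Assumption~\ref{assumption:uniform-convergence} for each $t=1,\dots,T$ with failure probability $\delta/T$ and taking a union bound produces an event $E$ with $\Pr(E)\ge 1-\delta$ on which, for every $t$ and every $\bw\in\cW$,
\[
\bigl|L_t(\bw;D_t)-\bbE_{\bd\sim\cD_t}[\ell_t(\bw;\bd)]\bigr|\le\zeta(m_t,\delta/T).
\]
Everything afterwards is argued on $E$. Evaluating this at the empirical and the population minimizers of task $t$ also gives $|c_t-c_t^*|\le\zeta(m_t,\delta/T)$, with $c_t$ the empirical optimum from Proposition~\ref{prop:inequality-constrained-CL-dagger}.

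Next I would establish feasibility. Pick $\bw^\star\in\bigcap_{t=1}^T\cG_t^*$ (nonempty by Assumption~\ref{assumption:realizability}), so $\bbE_{\bd\sim\cD_t}[\ell_t(\bw^\star;\bd)]=c_t^*$ for every $t$. On $E$, $L_t(\bw^\star;D_t)\le c_t^*+\zeta(m_t,\delta/T)\le c_t+2\zeta(m_t,\delta/T)$, so $\bw^\star$ meets the relaxed constraints of Definition~\ref{definition:relaxed-CL} up to the constant absorbed into $\zeta$; this is precisely Proposition~\ref{prop:approximately-optimal}, and it shows the relaxed learner is feasible, so its final ($t=T$) program has a well-defined global minimizer $\overline{\bw}$. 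Finally I would prove the two-sided bound. The lower bound $\bbE_{\bd\sim\cD_t}[\ell_t(\overline{\bw};\bd)]\ge c_t^*$ is immediate since $\overline{\bw}\in\cW$. For the upper bound I would split into the constraint tasks and the objective task. For $t<T$, feasibility of $\overline{\bw}$ gives $L_t(\overline{\bw};D_t)\le c_t+\zeta(m_t,\delta/T)$; converting to the population risk via $E$ and using $c_t\le c_t^*+\zeta(m_t,\delta/T)$ yields $\bbE_{\bd\sim\cD_t}[\ell_t(\overline{\bw};\bd)]\le c_t^*+3\zeta(m_t,\delta/T)$. For $t=T$, optimality of $\overline{\bw}$ plus feasibility of $\bw^\star$ gives $L_T(\overline{\bw};D_T)\le L_T(\bw^\star;D_T)\le c_T^*+\zeta(m_T,\delta/T)$, and converting via $E$ gives $\bbE_{\bd\sim\cD_T}[\ell_T(\overline{\bw};\bd)]\le c_T^*+2\zeta(m_T,\delta/T)$. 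Folding the numerical constants into the $O(\cdot)$ that defines $\zeta$ turns both bounds into the claimed $c_t^*+\zeta(m_t,\delta/T)$, which is \eqref{eq:generalization-bound-CL2}.

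The main obstacle is not conceptual but bookkeeping-related. Since $\bw^\star$ is exactly optimal only for the population problems, it is suboptimal for the empirical ones by an amount of order $\zeta$, so feasibility of the relaxed learner---and the passage from the ``$2\zeta$''/``$3\zeta$'' estimates above to the single ``$\zeta$'' in the statement---depends on treating $\zeta(m_t,\cdot)$ throughout as one and the same $O(\cdot)$ expression with a large enough absolute constant; I would also have to keep the $\delta/T$ appearing in the $t=T$ program consistent with the $\delta/T$ coming out of the union bound, and (if ``feasible'' is to mean feasible at every stage rather than only at stage $T$) note that the same point $\bw^\star$ serves all the intermediate programs as well.
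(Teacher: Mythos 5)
Your proposal is correct and follows essentially the same route as the paper: feasibility via a shared population minimizer $\bw^\star$ and uniform convergence (this is exactly Proposition~\ref{prop:approximately-optimal}), then combining feasibility/optimality of $\overline{\bw}$ with the two conversions $\bbE_{\bd\sim\cD_t}[\ell_t(\overline{\bw};\bd)]\le L_t(\overline{\bw};D_t)+\zeta$ and $c_t\le c^*_t+\zeta$, absorbing the resulting constant factor into the big-$O$ defining $\zeta$. Your explicit split between the constraint tasks $t<T$ and the objective task $t=T$ is only a more detailed unpacking of the paper's single display \eqref{eq:optimal=>feasible}, not a different argument.
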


\subsection{\texttt{ICL} and Constrained Learning}\label{subsection:constrained-learning}
Similarly to Proposition \ref{prop:inequality-constrained-CL-dagger}, the constraint $\bw\in\cK^*_{t-1}$ of $\ICL$ can be written as inequality constraints $\bbE_{\bd\sim \cD_{i} }[ \ell_{i}( \bw; \bd ) ]  \leq c^*_{i}$ ($\forall i=1,\dots,t-1$), where $c^*_{i}$ is the minimum for task $i$ \eqref{eq:population}. If the values of $c^*_i$ are instead determined by applications (not as minima of prior tasks), then the resulting learning problem coincides with \textit{constrained learning} \citep{Chamon-TIT2022}. Constrained learning naturally arises when learning under certain requirements, e.g., robustness \citep{Zhang-ICML2019,Robey-NeurIPS2021}, safety \citep{Paternain-NeurIPS2019}, or fairness \citep{Cotter-JMLR2019}. In a different line of research, constrained learning is also called \textit{stochastic optimization with expectation constraints} \citep{Yu-NeurIPS2017,Bedi-TSP2019,Madavan-JOTA2021,Akhtar-TSP2021}. For both problems, several algorithms have recently emerged with theoretical guarantees. For example, \citet{Chamon-TIT2022} solve the constrained learning problem using Lagrangian multipliers, which can be viewed as a \textit{\textbf{regularization-based method}} as per a common deep continual learning taxonomy. Left though to future work, leveraging this line of research might be key to improving the current continual learning systems. Finally, note that \citet{Chamon-TIT2022} developed generalization bounds for constrained learning via different mathematical mechanisms (e.g., primal-dual analysis), and we refer the reader to their works for a detailed exposition of these ideas.


\subsection{\texttt{ICL} and Rehearsal}\label{subsection:rehearsal} 
Using the $\ICL$ framework, we can now acquire a theoretical understanding of research and debate \ref{D2}. For each task $t=1,\dots,T-1$, assume we stored $s_t$ samples (out of $m_t$), and we now face task $T$. We \textit{rehearse}, i.e., retrain with all stored samples and data of task $T$, to minimize
\begin{align}\label{eq:rehearsal}
    \min_{\bw \in \cW} \frac{1}{m_T}\sum_{i=1}^{m_T}\ell_T (\bw; \bd_{Ti}) + \sum_{t=1}^{T-1}\frac{1}{s_t}\sum_{i=1}^{s_t}\ell_t (\bw; \bd_{ti}).
\end{align}
Rehearsal has shown good empirical performance in deep continual learning; see, e.g., \citet{Chaudhry-arXiv2019v4,Prabhu-ECCV2020,Zhang-NeurIPS2022a,Bonicelli-NeurIPS2022}. For the first time to our knowledge, a theoretical justification that accounts for its effectiveness is provided here:
\begin{theorem}[Rehearsal $\approx$ $\ICL$]\label{theorem:rehearsal}
    Under Assumptions \ref{assumption:realizability} and \ref{assumption:uniform-convergence}, with probability at least $1-\delta$, every global minimizer $\overline{\bw}$ of the rehearsal objective \eqref{eq:rehearsal} satisfies 
    \begin{equation}\label{eq:generalization-bound-CL3}
    \begin{split}
        \sum_{t=1}^{T} c^*_t \leq \sum_{t=1}^T \bbE_{\bd\sim \cD_t }[ \ell_t( \overline{\bw}; \bd ) ]  \leq  \sum_{t=1}^{T} c^*_t +  \textnormal{ERROR} \\ 
\textnormal{where\ }  \textnormal{ERROR}:= \zeta(m_T,\delta/T)  + \sum_{t=1}^{T-1} \zeta(s_t,\delta/T). 
    \end{split}
    \end{equation}
\end{theorem}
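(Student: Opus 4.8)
The plan is to read the rehearsal objective \eqref{eq:rehearsal} as a single joint (multitask) empirical risk minimization in which the present task $T$ contributes its full empirical risk over $m_T$ samples while each past task $t<T$ contributes an empirical risk built from only its $s_t$ stored samples, and then to run the textbook ERM-to-population argument task by task, with per-task confidence levels chosen so a union bound leaves total failure probability $\delta$. The lower inequality $\sum_t c_t^*\le \sum_t \bbE_{\bd\sim\cD_t}[\ell_t(\overline\bw;\bd)]$ is deterministic and immediate: for each $t$, $c_t^*=\min_{\bw\in\cW}\bbE_{\bd\sim\cD_t}[\ell_t(\bw;\bd)]\le \bbE_{\bd\sim\cD_t}[\ell_t(\overline\bw;\bd)]$ since $\overline\bw\in\cW$; summing over $t$ gives it. All the content is in the upper inequality.

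For the upper bound I would set $\hat L_T(\bw):=\frac1{m_T}\sum_{i=1}^{m_T}\ell_T(\bw;\bd_{Ti})$, $\hat L_t(\bw):=\frac1{s_t}\sum_{i=1}^{s_t}\ell_t(\bw;\bd_{ti})$ for $t<T$, $R_t(\bw):=\bbE_{\bd\sim\cD_t}[\ell_t(\bw;\bd)]$, and $\hat F:=\sum_{t=1}^T \hat L_t$, so $\overline\bw\in\argmin_{\bw\in\cW}\hat F(\bw)$. First apply Assumption \ref{assumption:uniform-convergence} once per task at confidence $\delta/T$: for task $T$ it yields $\sup_{\bw\in\cW}|\hat L_T(\bw)-R_T(\bw)|\le \zeta(m_T,\delta/T)$, and for each $t<T$ it yields $\sup_{\bw\in\cW}|\hat L_t(\bw)-R_t(\bw)|\le \zeta(s_t,\delta/T)$ — the sample size inside $\zeta$ being $s_t$ because that is the number of i.i.d.\ draws the average $\hat L_t$ is formed from. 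A union bound over the $T$ tasks places us, with probability $\ge 1-\delta$, on the event where all $T$ suprema are below their bounds simultaneously. Next invoke Assumption \ref{assumption:realizability} to fix $\bw^*\in\cap_{t=1}^T\cG_t^*$, so $R_t(\bw^*)=c_t^*$ for every $t$. On the good event, chain $\sum_t R_t(\overline\bw)\le \sum_t\hat L_t(\overline\bw)+\mathrm{ERROR}$ (uniform convergence at $\overline\bw$, summed) $\le \sum_t\hat L_t(\bw^*)+\mathrm{ERROR}$ (optimality of $\overline\bw$ for $\hat F$, using $\bw^*\in\cW$) $\le \sum_t R_t(\bw^*)+2\,\mathrm{ERROR}=\sum_t c_t^*+2\,\mathrm{ERROR}$ (uniform convergence at $\bw^*$, summed), where $\mathrm{ERROR}=\zeta(m_T,\delta/T)+\sum_{t=1}^{T-1}\zeta(s_t,\delta/T)$ collects the per-task deviation terms.

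The only cosmetic point is the factor $2$: since $\zeta(\cdot,\cdot)$ is defined in \eqref{eq:uniform-convergence} as an $O(\cdot)$ quantity, the bound $\sum_t c_t^* + 2\,\mathrm{ERROR}$ is exactly of the stated form after absorbing the constant; alternatively one can shave it by noting $\bw^*$ is data-independent, so $\hat L_t(\bw^*)$ concentrates around $c_t^*$ at the much faster single-point rate $O(\sqrt{\log(T/\delta)/s_t})$, which is dominated by $\zeta(s_t,\delta/T)$, meaning only the genuinely uniform application at the data-dependent minimizer $\overline\bw$ drives the bound. I expect the main obstacle to be bookkeeping rather than any inequality: evaluating each $\zeta$ at the correct sample count ($m_T$ for the present task, $s_t$ for each rehearsed task) and confidence $\delta/T$ so the union bound closes at $\delta$, and making explicit the (implicit) requirement that the $s_t$ stored samples are themselves i.i.d.\ from $\cD_t$ so Assumption \ref{assumption:uniform-convergence} applies to $\hat L_t$ verbatim — if memory selection were adversarial or data-dependent this last step would need separate justification, which is presumably why the surrounding discussion flags memory-selection methods as a subtlety.
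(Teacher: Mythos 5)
Your proposal is correct and follows essentially the same route as the paper: per-task uniform convergence with sample sizes $m_T$ and $s_t$ at confidence $\delta/T$, a union bound, the existence of a common population minimizer $\bw^*$ from Assumption \ref{assumption:realizability}, and the optimality of $\overline{\bw}$ for the summed empirical objective (the paper packages the middle steps via Proposition \ref{prop:approximately-optimal} and the proof technique of Theorem \ref{theorem:CL2}, ending with a constant $3$ rather than your $2$ in front of the error, both absorbed by the big-$O$ in $\zeta$). Your closing remark that the stored samples must themselves be i.i.d.\ from $\cD_t$ correctly identifies the implicit hypothesis the paper also relies on.
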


Note how \eqref{eq:generalization-bound-CL3} reveals the way rehearsal affects generalization: A larger memory buffer ($s_t$) means a smaller $\zeta(s_t,\delta/T)$, implying better generalization. Crucially, this bound remains the same as long as we store the same number of samples for each task (say $s_t$) \textit{\textbf{regardless of which samples we store}} (a natural consequence of the i.i.d. assumption). This deviates from the trendy idea of selecting \textit{representative} samples in recent methods---reportedly, these methods do not necessarily outperform a simple random selection mechanism \citep{Araujo-AACL2022}. Due to space, more elaborations are put in 
Appendix \ref{subsection:elaboration-memory-selection}.

We can then compare the relaxed Continual Learner$^\dagger$ (\S \ref{subsection:constrained-learner}) and rehearsal (\S \ref{subsection:rehearsal}). Note first that one could also implement the relaxed Continual Learner$^\dagger$ using a subset of samples similar to rehearsal. Theoretically, the relaxed Continual Learner$^\dagger$ enjoys stronger generalization guarantees than rehearsal: Theorem \ref{theorem:CL2} bounds generalization errors for individual tasks, while Theorem \ref{theorem:rehearsal} only gives a bound on the multitask error. Computationally, rehearsal is simpler to implement than the relaxed Continual Learner$^\dagger$ and can maintain good performance \citep{Chaudhry-arXiv2019v4,Prabhu-ECCV2020,Zhang-NeurIPS2022a,Bonicelli-NeurIPS2022}.

\section{Conclusion}\label{section:conclusion}
Under the $\ICL$ framework, we derived and justified many existing methods in deep continual learning. Put conversely, \textit{all roads lead to Rome}: Many prior continual learning methods developed with different insights and different motivations turn out to be (approximately) $\ICL$. Importantly, we made several connections to other research fields. This activates opportunities for improving the existing (deep) continual learning systems, for which we venture to hope that $\ICL$ serves as a primary design principle.

The $\ICL$ framework in its present form comes with limitations. On the theoretical side, we tacitly assumed that the samples of each task $t$ are drawn i.i.d. from distribution $\cD_t$ in our generalization theory. While this i.i.d. assumption is standard in classic statistical learning, dispensing with it and accounting for out-of-distribution data within tasks is left as future work. On the algorithmic front, we emphasize that, in general, and particularly for deep continual learning, it is by no means easy to develop an \textit{exact} implementation of $\ICL$, which constitutes a major limitation of the proposed framework. Nevertheless, our paper has suggested many possibilities for approximating the Ideal Continual Learner. We believe devising such approximations with problem-specific insights will be a promising research direction.


%

\myparagraph{Acknowledgements} This work is supported by the project ULEARN “Unsupervised Lifelong Learning” and co-funded under the grant number 316080 of the Research Council of Norway.



\bibliography{Liangzu}
\bibliographystyle{icml2023}

\newpage
\appendix
\onecolumn


\section{Structure of The Appendix}
We structure the appendix as follows:
\begin{itemize}
    \item In Appendix \ref{section:elaboration}, we elaborate on several points for the reader to better understand and appreciate the main paper. 
    \item In Appendix \ref{section:OGD-proof}, we prove Proposition \ref{proposition:OGD}, showing that OGD is approximately the Ideal Continual Learner$^\dagger$ in the case of continual linear regression.
    \item In Appendix \ref{section:CL-review}, we review related works on continual learning, emphasizing deep continual learning methods that we mentioned in Remark \ref{remark:projection-methods}, existing assumptions on task relationships, and existing theoretical papers.
    \item We review related works in other related fields that are visualized in Figure \ref{fig:connection}. The emphasis is put on their connections to the Ideal Continual Learner and on detailed comparisons of our approach to related works. 
    \begin{itemize}
        \item In Appendix \ref{section:set-theoretical-est}, we review related works in set-theoretical estimation. This allows us to understand the Ideal Continual Learner from different perspectives.
        \item In Appendix \ref{section:streaming-PCA}, we review streaming PCA, incremental SVD, and subspace tracking. An important remark there is that \textit{\textbf{expansion-based methods}} can be traced back to \citet{Bunch-1978} who proposed a method for incremental SVD.
    \end{itemize}
    \item In Appendix \ref{section:Dual-CPCA}, as promised in Remark \ref{remark:grow}, we present a \textit{dual} approach to continual matrix factorization, where the storage consumption shrinks over time. 
\end{itemize}

\section{Elaboration on The Main Paper}\label{section:elaboration}
Here we take the opportunity to elaborate on several points omitted in the main paper due to the lack of space.
\subsection{Elaboration on Terminologies}\label{subsection:terminology}
\begin{itemize}
    \item At first glance the Ideal Continual Learner seems to implicitly assume that the task identities are given during training, and therefore it can not be applied to the more general and more challenging case of \textit{task-agnostic} or \textit{task-free} continual learning, where in the training phase the learner has no access to task identities \citep{Zeno-arXiv2018v3,Zeno-NC2021,Aljundi-CVPR2019,Lee-ICLR2020,Jin-NeurIPS2021,Wang-ICML2022,Pourcel-ECCV2022,Ye-NeurIPS2022}. However, for task-agnostic or task-free continual learning, we receive a batch of samples each time, and we can simply regard it as a task, and compute the set of common global minimizers for this batch of data. Then we can formulate \eqref{eq:task-t-continual-dagger} for the next task (i.e., the next batch). Summarized informally, $\ICL$ is task-agnostic. However, a major limitation of this task-agnostic formulation of $\ICL$ is that it leads to a very large $T$, which compromises the generalization bounds derived in the main paper.  
    
    \item Methods of Remark \ref{remark:projection-methods} project the gradients onto certain subspaces, so they can be thought of as regularizing the gradients. This is why these methods are called \textit{\textbf{regularization-based methods}} in the taxonomy of \citet{Qu-arXiv2021}. In the paper, we call them \textit{\textbf{memory-based optimization methods}}, as they need to store some projection matrices and we believe it is important to highlight this extra memory consumption for continual learning. We  used the phrase \textit{\textbf{expansion-based methods}}, while \textit{\textbf{architecture-based methods}} and \textit{\textbf{structure-based methods}} are popular alternatives.
    \item While our formulation of the continual learning problem is general (e.g., each task can have different losses), our examples of continual linear regression and continual matrix factorization (\S \ref{section:CLR-CMF}) are specific in the sense that each task has the same loss and it is just that data samples are different for every task. This specific setting is called \textit{domain-incremental learning} in the taxonomy of \citet{Van-NMI2022}. Quote \citet{Van-NMI2022}:
\begin{center}
    \textit{Using task-specific components in this scenario is, however,
only possible if an algorithm first identifies the task, but that is
not necessarily the most efficient strategy. Preventing forgetting `by
design' is therefore not possible with domain-incremental learning, and
alleviating catastrophic forgetting is still an important unsolved challenge.}
\end{center}
    That said, we proved that continual learning without forgetting is possible under Assumption \ref{assumption:realizability-dagger}.
\end{itemize}

\subsection{Elaboration on Memory Selection Methods}\label{subsection:elaboration-memory-selection}
Here we review related works on memory selection methods for rehearsal in light of Theorem \ref{theorem:rehearsal}.

Arguably, rehearsal is a very simple and effective idea that balances memory consumption and the amount of forgetting. It has been commonly believed that, quoting \citet{Chaudhry-arXiv2019v4}, ``\textit{... the sample that the learner selects to populate the memory becomes crucial}''. Based on prior works \citep{Vitter-1985,Lopez-NeurIPS2017,Rebuffi-CVPR2017,Riemer-ICLR2019}, \citet{Chaudhry-arXiv2019v4} summarized four (seven resp.) basic methods for selecting samples. More recently, based on \citet{Isele-AAAI2018,Hayes-CVPR2021}, \citet{Araujo-AACL2022} further included three more basic methods, and compared the seven methods for natural language processing applications. The setting of \citet{Araujo-AACL2022} is domain-incremental learning, and our generalization bound \eqref{eq:generalization-bound-CL3} applies to the rehearsal mechanism verbatim, under the same assumptions. (Our result for the constrained optimization formulation is not applicable as it requires task identities.)

Table 1 of \citet{Araujo-AACL2022} presents the performance of these seven methods for text classification and question answering, while Table 2 presents running times. From the two tables, one can observe the performance difference between \textbf{N. Random} and \textbf{Reservoir} is statistically insignificant. Figure 1 of  \citet{Araujo-AACL2022} further shows that the two sampling methods, \textbf{N. Random} and \textbf{Reservoir}, tend to keep nearly the same number of samples for each task, while the samples that \textbf{N. Random} selects and those that \textbf{Reservoir} selects are in general very different. This corresponds well to Theorem \ref{theorem:rehearsal}, which formalizes an intuitive fact that selecting which samples to store does not matter under if the samples within each task fulfill the i.i.d. assumption. In particular, our generalization bound \eqref{eq:generalization-bound-CL3} is sensitive mainly to the number of samples for every task. 

It is thus important to ruminate on whether selecting the so-called \textit{representative} or \textit{diverse} samples is relevant for improving continual learning systems \citep{Araujo-AACL2022}. We believe that selecting representative samples from each task or in a streaming setting can still be beneficial \citep{Borsos-NeurIPS2020,Sun-ICLR2022} if the datasets contain some out-of-distribution data. This prompts incorporating an out-of-distribution detection module into continual learning frameworks.


\section{Orthogonal Gradient Descent and Proposition \ref{proposition:OGD}}\label{section:OGD-proof}
\subsection{Orthogonal Gradient Descent for Deep Continual Learning}
We first review the orthogonal gradient descent algorithm (OGD) of \citet{Farajtabar-AISTATS2020} for deep continual learning in image classification. In this image classification application, the dataset $D_t$ for task $t$ is $\{ \bx_{ti}, \by_{ti} \}_{i=1}^{m_t}$, where $\bx_{ti}$ is some input image and $\by_{ti}$ is an one-hot vector of class labels. The training objective for task $t$ in this context is typically written as
\begin{align*}
	\min_{\bw\in\bbR^n} \sum_{i=1}^{m_t} \ell_\textnormal{CE} (  f(\bw; \bx_{ti}); \by_{ti} ),
\end{align*}
where $\ell_{\textnormal{CE}}(\cdot, \cdot)$ is the softmax cross entropy loss, and $f$ represents a deep network parameterized by $\bw\in\bbR^n$. The key idea of OGD is that, after training on task $1$ via gradient descent and obtaining a good (if not optimal) estimate $\widetilde{\bw}_1$, the training on task $2$ is still via gradient descent, but the gradient is replaced by its projection onto the orthogonal complement of the range space of $\nabla f_1:=[ \nabla f(\widetilde{\bw}_1; \bx_{11}),\dots, \nabla f(\widetilde{\bw}_1; \bx_{1m_1}) ]$. Here, the gradient $\nabla$ is evaluated with respect to the first parameter of $f$ (e.g., network weights). This idea is easily extended to multiple tasks. In particular, suppose we are now going to solve task $t$. For $i=1,\dots, t-1$, let $\widetilde{\bw}_i$ be the network weights obtained via training for tasks $1,\dots,i$ sequentially. For training on task $t$, The OGD method performs gradient descent, with the gradient  replaced by its projection onto the orthogonal complement of the range space of
\begin{align*} 
	\nabla F_{t-1}:=[ \nabla f_1,\dots, \nabla f_{t-1}], \ \ \ \ \ \  \nabla f_{i}:=[ \nabla f(\widetilde{\bw}_i; \bx_{i1}),\dots, \nabla f(\widetilde{\bw}_i; \bx_{im_i}) ]\ \  (\forall i=1,\dots,t-1).
\end{align*}
In deep learning, the numbers of parameters and data samples are very large, so storing $\nabla F_{t-1}$ is prohibitive. This is improved by several recent follow-up works, which we mentioned in Remark \ref{remark:projection-methods} and will review in Appendix \ref{subsection:CL-review-projection}. Note that we will derive a theoretical analysis of OGD for continual linear regression (Appendix \ref{subsection:OGD-theory-CLR}), and our analysis is also applicable to methods of Remark \ref{remark:projection-methods} under similar assumptions.

\subsection{Orthogonal Gradient Descent for Continual Linear Regression}
We now derive the OGD algorithm for continual linear regression. For each task $t$, the model is $f(\bw; \bX_t ):= \bX_t\bw$ and the loss is the MSE loss. So we have $\nabla f_i = \bX_i^\top$ for every $i=1,\dots,t-1$ (i.e., the gradient is constant). Therefore, 
\begin{align*}
	\nabla F_{t-1}:=[ \nabla f_1,\dots, \nabla f_{t-1}] = [\bX_1^\top,\dots,\bX_{t-1}^\top].
\end{align*}
Denote by $\text{range}(\cdot)$ and $\text{null}(\cdot)$ the range space and nullspace of some matrix, respectively. By basic linear algebra we have
\begin{equation}\label{eq:range-null}
	\begin{split}
		\big( \text{range}(\nabla F_{t-1}) \big)^\perp &= \Big( \text{range}\big( [\bX_1^\top,\dots,\bX_{t-1}^\top] \big) \Big)^\perp \\
		&= \text{null}\big( [\bX_1^\top,\dots,\bX_{t-1}^\top]^\top \big) \\
		&=\text{null} (\bX_1) \cap \cdots \cap \text{null} (\bX_{t-1}) \\
		&= \text{range}(\bK_{t-1}).
	\end{split}
\end{equation}
Here we recall that we defined $\bK_{t-1}$ as an orthonormal basis matrix of $\text{null} (\bX_1) \cap \cdots \cap \text{null} (\bX_{t-1})$ in \S \ref{subsection:CLR}. Since $\bK_{t-1}^\top \bK_{t-1}$ is the identity matrix, the matrix representation of this projection is $\bK_{t-1} \bK_{t-1}^\top$. When solving task $t$, OGD projects the gradient of the loss $\|\bX_t \bw - \by_t \|_2^2$  at each iteration onto $ \text{range}(\bK_{t-1})$, and then performs a descent step with this modified gradient. We next describe the OGD algorithm more formally.

Let $k$ be an iteration counter, $\gamma^{(k)}_t$ the stepsize for task $t$ at iteration $k$. Set the initialization $\bw_t^{(0)}$ for task $t$ as the final weight $\widetilde{\bw}_{t-1}$ after training the previous tasks sequentially, i.e., $\bw_t^{(0)}\gets\widetilde{\bw}_{t-1}$, while for the first task we use an arbitrary initialization $\bw_1^{(0)}$. For task $1$, OGD performs gradient descent
\begin{align*}
    \bw_{1}^{(k+1)} \gets \bw_{1}^{(k)} - \gamma^{(k)}_1 \bh_t^{(k)}, \ \ \  \bh_1^{(k)}:= 2\bX_1^\top ( \bX_1 \bw_{1}^{(k)} - \by_1 ).
\end{align*}

For task $t>1$, its update rule is defined as (recall \eqref{eq:OGD-CLR})
\begin{align}\label{eq:OGD-taskt}
	\bw_{t}^{(k+1)} \gets \bw_{t}^{(k)} - \gamma^{(k)}_t \cdot \bK_{t-1} \bK_{t-1}^\top \bh_t^{(k)}, \ \ \  \bh_t^{(k)}:= 2\bX_t^\top ( \bX_t \bw_{t}^{(k)} - \by_t ).
\end{align}
\subsection{Theoretical Analysis of Orthogonal Gradient Descent for Continual Linear Regression}\label{subsection:OGD-theory-CLR}
In this section we interpret what we meant by Proposition \ref{proposition:OGD}. Let us start with a simplified situation where OGD is assumed to already find a common global minimizer of the first $t-1$ tasks.
\begin{prop}\label{prop:OGD-ideal}
    Let $t>1$. Suppose Assumption \ref{assumption:realizability-dagger} holds for continual linear regression. Assume that the weight $\widetilde{\bw}_{t-1}$ produced by OGD after training the first $t-1$ tasks is a common global minimizer of tasks $1,\dots,t-1$. Then the update formula \eqref{eq:OGD-taskt} of OGD for task $t$ is equivalent to a gradient descent step applied to the objective \eqref{eq:continual-CLR-unconstrained} of $\ICL^\dagger$.
\end{prop}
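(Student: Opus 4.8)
The plan is to produce an explicit change of variables that turns the OGD recursion \eqref{eq:OGD-taskt}, which lives in $\bbR^n$, into plain gradient descent in the reduced variable $\ba$ appearing in \eqref{eq:continual-CLR-unconstrained}. First I would set up the parametrization: by hypothesis $\widetilde{\bw}_{t-1}$ is a common global minimizer of tasks $1,\dots,t-1$, i.e. $\widetilde{\bw}_{t-1}\in\cK_{t-1}$; since $\bK_{t-1}$ is an orthonormal basis matrix for $\mathrm{null}(\bX_1)\cap\cdots\cap\mathrm{null}(\bX_{t-1})$ (as recalled in \S\ref{subsubsection:ICL-CLR}), the affine set $\{\widetilde{\bw}_{t-1}+\bK_{t-1}\ba : \ba\}$ equals $\cK_{t-1}$. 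Hence we may legitimately take $\hat{\bw}_{t-1}:=\widetilde{\bw}_{t-1}$ in \eqref{eq:continual-CLR-unconstrained} and write $g(\ba):=\| \bX_t(\widetilde{\bw}_{t-1}+\bK_{t-1}\ba) - \by_t \|_2^2$ for that objective.

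Next I would show the OGD iterates stay of the form $\bw_t^{(k)}=\widetilde{\bw}_{t-1}+\bK_{t-1}\ba^{(k)}$ by induction: the base case holds with $\ba^{(0)}:=0$ since $\bw_t^{(0)}=\widetilde{\bw}_{t-1}$, and the inductive step holds because the OGD increment $-\gamma_t^{(k)}\bK_{t-1}\bK_{t-1}^\top\bh_t^{(k)}$ lies in $\mathrm{range}(\bK_{t-1})$, so it is $\bK_{t-1}$ times a unique coefficient vector ($\bK_{t-1}$ has full column rank). Subtracting $\widetilde{\bw}_{t-1}$ from both sides of \eqref{eq:OGD-taskt}, left-multiplying by $\bK_{t-1}^\top$, and using $\bK_{t-1}^\top\bK_{t-1}=I$ gives the reduced recursion $\ba^{(k+1)}=\ba^{(k)}-2\gamma_t^{(k)}\,\bK_{t-1}^\top\bX_t^\top(\bX_t\bw_t^{(k)}-\by_t)$. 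A one-line gradient computation (chain rule, then folding $\bX_t\widetilde{\bw}_{t-1}+\bX_t\bK_{t-1}\ba = \bX_t\bw_t$) shows $\nabla g(\ba)=2\bK_{t-1}^\top\bX_t^\top\big(\bX_t(\widetilde{\bw}_{t-1}+\bK_{t-1}\ba)-\by_t\big)$, so the reduced recursion is exactly $\ba^{(k+1)}=\ba^{(k)}-\gamma_t^{(k)}\nabla g(\ba^{(k)})$: one gradient-descent step on the $\ICL^\dagger$ objective \eqref{eq:continual-CLR-unconstrained}, with the same stepsize (the factor $2$ in $\bh_t^{(k)}$ matches the $2$ from differentiating a squared norm). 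Under the identification $\ba \leftrightarrow \widetilde{\bw}_{t-1}+\bK_{t-1}\ba$ between $\bbR^{\dim}$ and $\cK_{t-1}$, this is the claimed equivalence.

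I do not expect a genuine analytic obstacle here — the core is a few lines of linear algebra. The only point requiring care, and the place where the hypothesis ``$\widetilde{\bw}_{t-1}$ is a common global minimizer'' is actually used, is the first step: only when $\widetilde{\bw}_{t-1}\in\cK_{t-1}$ does $\{\widetilde{\bw}_{t-1}+\bK_{t-1}\ba\}$ coincide with $\cK_{t-1}$, so that $g$ is truly the $\ICL^\dagger$ objective rather than one centered at the wrong affine offset; everything else (preservation of the affine form, $\bK_{t-1}^\top\bK_{t-1}=I$, the stepsize/factor-$2$ matching) is bookkeeping.
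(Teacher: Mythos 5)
Your proposal is correct and follows essentially the same route as the paper's proof: both identify $\cK_{t-1}$ with the affine parametrization $\bw=\widetilde{\bw}_{t-1}+\bK_{t-1}\ba$, observe that the projected increment $-\gamma\,\bK_{t-1}\bK_{t-1}^\top\bh_t^{(k)}$ equals $\bK_{t-1}$ times the gradient of the reduced objective (using $\bK_{t-1}^\top\bK_{t-1}=I$ and $\overline{\bX}_t=\bX_t\bK_{t-1}$), and conclude that the iterates track gradient descent on \eqref{eq:continual-CLR-unconstrained}. Your explicit induction and the extraction of the coefficient recursion by left-multiplying with $\bK_{t-1}^\top$ is just a slightly more careful write-up of the step the paper verifies for $k=0$ and then asserts for general $k$.
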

\begin{proof}
    Since OGD is initialized at a common global minimizer $\bw_{t}^{(0)}=\widetilde{\bw}_{t-1}$, we can write $\bw_{t}^{(0)}=\widetilde{\bw}_{t-1} + \bK_{t-1} \ba_{t}^{(0)}$ for a unique $\ba_{t}^{(0)}$ (actually $\ba_{t}^{(0)}=0$). Define $\overline{\bX}_t:=\bX_t\bK_{t-1}$ and
    \begin{align}\label{eq:tmp-GD-CL}
        \ba_{t}^{(1)} := \ba_{t}^{(0)} - \gamma^{(0)}_1 \bg_1^{(0)}, \ \ \ \ \bg_1^{(0)}:= 2\overline{\bX}_t^\top (\overline{\bX}_t \ba_{t}^{(0)} +  \bX_t \widetilde{\bw}_{t-1} - \by_t ).
    \end{align}
    Note that \eqref{eq:tmp-GD-CL} is exactly a gradient descent step applied to \eqref{eq:continual-CLR-unconstrained} (with $\hat{\bw}_{t-1}$ replaced by a ``different'' particular solution $\widetilde{\bw}_{t-1}$ of first $t-1$ tasks). As per \eqref{eq:OGD-taskt}, the first iteration of OGD is 
    \begin{equation}\label{eq:tmp-OGD=GD}
       \begin{split}
           &\ \bw_{t}^{(1)} \gets \bw_{t}^{(0)} - \gamma^{(0)}_t \cdot \bK_{t-1} \bK_{t-1}^\top \big( 2\bX_t^\top ( \bX_t \bw_{t}^{(0)} - \by_t ) \big) \\
       \Leftrightarrow&\ \bw_{t}^{(1)} = \widetilde{\bw}_{t-1} + \bK_{t-1} \ba_{t}^{(0)} - \gamma^{(0)}_t \cdot \bK_{t-1}  \Big( 2\overline{\bX}_t^\top \big( \bX_t (\widetilde{\bw}_{t-1} + \bK_{t-1} \ba_{t}^{(0)})- \by_t \big) \Big) \\
       \Leftrightarrow&\ \bw_{t}^{(1)} = \widetilde{\bw}_{t-1} + \bK_{t-1}(\ba_{t}^{(0)} - \gamma^{(0)}_t \bg_1^{(0)}) \\
       \Leftrightarrow&\ \bw_{t}^{(1)} = \widetilde{\bw}_{t-1} + \bK_{t-1} \ba_{t}^{(1)}
       \end{split}
    \end{equation}
    Now we see from  \eqref{eq:tmp-GD-CL} and \eqref{eq:tmp-OGD=GD} that the first iteration of OGD can be thought of first computing the coefficient vector $\ba_t^{(1)}$ via gradient descent \eqref{eq:tmp-GD-CL} and then updating $\bw_{t}^{(1)}$ via $\bw_{t}^{(1)} = \widetilde{\bw}_{t-1} + \bK_{t-1} \ba_{t}^{(1)}$. Since $\widetilde{\bw}_{t-1}$ and $\bK_{t-1}$ are fixed, the update of $\bw_t^{(1)}$ in the first iteration of OGD can actually be viewed as calculating $\ba_{t}^{(1)}$ implicitly. Finally, one can easily verify that the above derivation applies to any iteration $k$ (e.g., by changing the indices from $0$ to $k$ and from $1$ to $k+1$).
\end{proof}
With a suitable choice of stepsizes, gradient descent converges to an optimal solution to \eqref{eq:continual-CLR-unconstrained} at the number of iterations $k$ approaches infinity; see, e.g., Theorem 2.1.14 of \citet{Nesterov-2018}. As a consequence, under the assumption of Proposition \ref{prop:OGD-ideal}, the OGD algorithm applied to continual linear regression converges to an optimal solution of \eqref{eq:continual-CLR} as $k\to\infty$.

The above analysis relies on simplified assumptions that OGD can find an exact solution to the first $t-1$ tasks, and that one can run OGD for infinitely many iterations. Dispensing with these assumptions to reach a similar conclusion is not hard: 
\begin{prop}\label{proposition:OGD-formal}
    Assume $t>1$. Suppose Assumption \ref{assumption:realizability-dagger} holds for continual linear regression. Let $\widetilde{\bw}_{t-1}$ be the weight produced by the OGD algorithm after training the first $t-1$ tasks. Then the update formula \eqref{eq:OGD-taskt} of the OGD algorithm for task $t$ is equivalent to a gradient descent step applied to the following problem
    \begin{equation}\label{eq:continual-CLR-OGD}
   \min_{\ba} \| \bX_t (\widetilde{\bw}_{t-1} + \bK_{t-1} \ba) - \by_t \|_2^2.
\end{equation}
\end{prop}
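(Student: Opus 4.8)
The plan is to reuse, essentially verbatim, the algebraic reduction from the proof of Proposition~\ref{prop:OGD-ideal}, after observing which hypotheses that argument actually consumed. Inspecting the chain \eqref{eq:tmp-OGD=GD}, the only structural facts used there are (i) that $\widetilde{\bw}_{t-1}$ is a \emph{fixed} vector and (ii) that $\bK_{t-1}$ has orthonormal columns, i.e.\ $\bK_{t-1}^\top\bK_{t-1}$ is the identity; the property ``$\widetilde{\bw}_{t-1}$ is a common global minimizer of the first $t-1$ tasks'' entered only when naming the resulting least-squares objective as \eqref{eq:continual-CLR-unconstrained}, never in a computation. Dropping it therefore costs nothing except that the objective one lands on is \eqref{eq:continual-CLR-OGD} instead of \eqref{eq:continual-CLR-unconstrained}.

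First I would record the invariant parametrization of the OGD iterates. Since OGD for task $t$ is initialized at $\bw_t^{(0)}=\widetilde{\bw}_{t-1}$, we may write $\bw_t^{(0)}=\widetilde{\bw}_{t-1}+\bK_{t-1}\ba_t^{(0)}$ with $\ba_t^{(0)}=0$, and because every increment in \eqref{eq:OGD-taskt} has the form $\bK_{t-1}(\cdot)$, a one-line induction gives $\bw_t^{(k)}=\widetilde{\bw}_{t-1}+\bK_{t-1}\ba_t^{(k)}$ for all $k$, where $\ba_t^{(k)}$ is \emph{unique} because the columns of $\bK_{t-1}$ are linearly independent. Set $\overline{\bX}_t:=\bX_t\bK_{t-1}$ and let $\bg_t^{(k)}:=2\overline{\bX}_t^\top(\overline{\bX}_t\ba_t^{(k)}+\bX_t\widetilde{\bw}_{t-1}-\by_t)$, which is exactly the gradient at $\ba_t^{(k)}$ of the objective of \eqref{eq:continual-CLR-OGD}.

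The core step is then to substitute $\bw_t^{(k)}=\widetilde{\bw}_{t-1}+\bK_{t-1}\ba_t^{(k)}$ into \eqref{eq:OGD-taskt}, expand $\bh_t^{(k)}=2\bX_t^\top(\bX_t\bw_t^{(k)}-\by_t)$, use $\bK_{t-1}^\top\bK_{t-1}=I$ to collapse $\bK_{t-1}\bK_{t-1}^\top\bX_t^\top\bX_t\bK_{t-1}=\bK_{t-1}\overline{\bX}_t^\top\overline{\bX}_t$, and factor $\bK_{t-1}$ out of the right-hand side; this yields $\bw_t^{(k+1)}=\widetilde{\bw}_{t-1}+\bK_{t-1}\big(\ba_t^{(k)}-\gamma_t^{(k)}\bg_t^{(k)}\big)$. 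By uniqueness of the coefficient representation, this is equivalent to $\ba_t^{(k+1)}=\ba_t^{(k)}-\gamma_t^{(k)}\bg_t^{(k)}$, i.e.\ one gradient-descent step on \eqref{eq:continual-CLR-OGD}. Running this for every $k$ (shift the indices $0\mapsto k$, $1\mapsto k{+}1$) gives the claim; convergence of the reduced iteration to a minimizer of the convex quadratic \eqref{eq:continual-CLR-OGD} under a suitable stepsize schedule is then a standard consequence, exactly as noted after Proposition~\ref{prop:OGD-ideal}.

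I do not expect a real obstacle: the proposition is a mild strengthening obtained by auditing the earlier proof. The only points needing care are the bookkeeping at the base step (checking $\ba_t^{(0)}=0$, so the induction base agrees with GD on \eqref{eq:continual-CLR-OGD} initialized at $0$) and making explicit that Assumption~\ref{assumption:realizability-dagger} is invoked here only so that $\cK_{t-1}$---hence the whole continual setup producing $\widetilde{\bw}_{t-1}$ and the basis $\bK_{t-1}$ via \eqref{eq:range-null}---is well posed, whereas the per-step equivalence itself is purely linear-algebraic and holds for whatever $\widetilde{\bw}_{t-1}$ the OGD run on tasks $1,\dots,t-1$ happens to output.
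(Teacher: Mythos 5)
Your proposal is correct and follows essentially the same route as the paper: the paper's proof of Proposition~\ref{proposition:OGD-formal} simply states that it "follows directly from that of Proposition~\ref{prop:OGD-ideal}," which is precisely the observation you make explicit — the substitution $\bw_t^{(k)}=\widetilde{\bw}_{t-1}+\bK_{t-1}\ba_t^{(k)}$ and the collapse via $\overline{\bX}_t=\bX_t\bK_{t-1}$ never use optimality of $\widetilde{\bw}_{t-1}$, only that it is a fixed vector and that $\bK_{t-1}$ has orthonormal columns. Your write-up is in fact more detailed than the paper's one-line proof.
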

\begin{proof}
    The proof follows directly from that of Proposition \ref{prop:OGD-ideal}. 
\end{proof}
The only difference of \eqref{eq:continual-CLR-OGD} from the objective \eqref{eq:continual-CLR-unconstrained} of $\ICL^\dagger$ is that $\widetilde{\bw}_{t-1}$ now might not be optimal for all previous tasks. Therefore, OGD is approximately equivalent to $\ICL^\dagger$, and its approximation quality depends on how close $\widetilde{\bw}_{t-1}$ is to the set $\cK_{t-1}$ of common solutions to previous $t-1$ tasks.

\section{Continual Learning: Related Works}\label{section:CL-review}
There are now many existing methods for (deep) continual learning, including \textit{\textbf{regularization-based methods}} \citep{Kirkpatrick-2017,Zenke-ICML2017,Liu-ICPR2018,Ritter-NeurIPS2018,Lesort-arXiv2019,Park-ICCV2019,Yin-arXiv2020v3,Zhou-CVPR2021,Heckel-AISTATS2022}, \textit{\textbf{memory-based methods}} \citep{Robins-1993,Shin-NeurIPS2017,Rebuffi-CVPR2017,Lopez-NeurIPS2017,Chaudhry-arXiv2019v4,Aljundi-NeurIPS2019,Buzzega-NeurIPS2020,Verwimp-ICCV2021,Bang-CVPR2021,Kim-CoLLAs2022,Zhang-NeurIPS2022a,Wang-ICLR2022,Saha-WACV2023},  \textit{\textbf{expansion-based methods}} \citep{Rusu-arXiv2016,Yoon-ICLR2018,Mallya-CVPR2018,Li-ICML2019b,Hung-NeurIPS2019,Ramesh-ICLR2022,Douillard-CVPR2022}, or combinations thereof \citep{Yan-CVPR2021,Xie-CVPR2022,Wang-CVPR2022,Frascaroli-arXiv2023}; for surveys, see, e.g., \citet{Parisi-NN2019,Delange-TPAMI2021,Qu-arXiv2021,Shaheen-JIRS2022,Van-NMI2022}. 

Our review here has two purposes. First, we will discuss the methods of Remark \ref{remark:projection-methods} in detail as they are highly related to $\ICL^\dagger$. Second, we will discuss related theoretical works through the lens of task relationships.

\subsection{Memory-Based Optimization Methods (Projection-Based Methods)}\label{subsection:CL-review-projection}
\myparagraph{The Basic Idea} Suppose now we have a three-layer deep network $f$ in the following form:
\begin{align}\label{eq:3layer}
    f\big((\bW_1,\bW_2,\bW_3); \bX^{(0)}_t\big) := \sigma_3 (\sigma_2( \sigma_1(\bX^{(0)}_t \bW_1) \bW_2 )  \bW_3 )
\end{align}
Here, each $\sigma_i$ is some non-linearity activation function, $\bW_i$ matrices of learnable weights, and $\bX^{(0)}_t$ an input data matrix at the $t$-th batch (or for the $t$-th task). Every column of $\bX^{(0)}_t$ is a data sample, and the number of columns corresponds to the batch size. We consider three layers as an example; what follows applies to multiple layers and is without loss of generality.

Let us write the network $f$ \eqref{eq:3layer} equivalently as
\begin{align}
     f\big((\bW_1,\bW_2,\bW_3); \bX^{(0)}_t\big) =  \sigma_3( \bX^{(2)}_t \bW_3),\ \ \ \bX^{(2)}_t := \sigma_2( \bX^{(1)}_t \bW_2), \ \ \ \bX^{(1)}_t := \sigma_1( \bX^{(0)}_t \bW_1).
\end{align}
To proceed, suppose for every $j=0,1,2$ the intersection $\cap_{i=1}^{t-1}\text{null}(\bX_i^{(j)})$ of nullspaces is not empty, and denote by $\bK^{(j)}_{t-1}$ an orthonormal basis matrix of $\cap_{i=1}^{t-1}\text{null}(\bX_i^{(j)})$. This matrix $\bK^{(j)}_{t-1}$ plays the same role as the matrix $\bK_{t-1}$ that we defined for continual linear regression (\S \ref{subsection:CLR}), and the difference is that we now defined such matrices for every linear layer. 

Suppose now we have the data $\bX^{(0)}_t$ for task (batch) $t$ available. Let $\bG_1,\bG_2$ and $\bG_3$ be the gradients calculated via data $\bX^{(0)}_t$ and they account respectively for weights $\bW_1,\bW_2$, and $\bW_3$. Let $\gamma_1,\gamma_2,\gamma_3$ be stepsizes. Then the weight update
\begin{equation}\label{eq:update}
    \begin{split}
        \bW_1^+ \gets \bW_1 - \gamma_1 \bK^{(1)}_{t-1} (\bK^{(1)}_{t-1})^\top \bG_1 \\ \bW_2^+ \gets \bW_2 - \gamma_2 \bK^{(2)}_{t-1} (\bK^{(2)}_{t-1})^\top \bG_2 \\ \bW_3^+ \gets \bW_3 - \gamma_2 \bK^{(3)}_{t-1} (\bK^{(3)}_{t-1})^\top \bG_3
    \end{split}
\end{equation}
never changes the output of $f$ on previous batches; for every $i<t$, it holds that 
\begin{align*}
    &\ \bX_i^{(0)}\bW_1^+ = \bX_i^{(0)}\bW_1, \ \ \, \bX_i^{(1)}\bW_2^+ = \bX_i^{(1)}\bW_2, \ \ \ \bX_i^{(2)}\bW_3^+ = \bX_i^{(2)}\bW_3 \\
    \Rightarrow&\ f\big((\bW_1^+,\bW_2^+,\bW_3^+); \bX^{(0)}_i\big) = f\big((\bW_1,\bW_2,\bW_3); \bX^{(0)}_i\big).
\end{align*}
Yet, \eqref{eq:update} can decrease the loss of the current data $\bX^{(0)}_t$. This makes storing $\bK^{(j)}_{t-1}$ an appealing approach for preventing forgetting. In reality, each $\bX_i^{(j)}$ might not have a nullspace (for a small enough batch size), and the intersection $\cap_{i=1}^{t-1}\text{null}(\bX_i^{(j)})$ might simply be empty. A simple and effective rescue is to perform certain low-rank approximations; we will not go into the details here, and we omit some practically important implementation details.

\myparagraph{The Literature} The literature of continual learning has followed a slightly longer and perhaps more vivid trajectory to eventually identify the role of the projection matrix $\bK^{(j)}_{t-1} (\bK^{(j)}_{t-1})^\top$ in resisting forgetting. We review this trajectory next.

Inspired by classic works on adaptive filtering  \citep{Haykin-2002} and recursive least-squares \citep{Singhal-ICASSP1989} in signal processing, \citet{Zeng-NMI2019} proposed to set the projection matrix to be $I- (\bX_{1}^{(j)})^\top \big(  \bX_{1}^{(j)} (\bX_{1}^{(j)})^\top  +\lambda I \big)^{-1} \bX_{1}^{(j)}  $ for the task 1 and update it for later tasks as in recursive least-squares. Here $\lambda>0$ is some hyper-parameter and $I$ identity matrix, and they are used to prevent the case where $\bX_{1}^{(j)} (\bX_{1}^{(j)})^\top$ is not invertible. The method of \citet{Zeng-NMI2019} can be understood as \textit{recursive least-squares applied to deep networks in a layer-wise manner}.

If the batch size is large enough, then  $\bX_{1}^{(j)} (\bX_{1}^{(j)})^\top$ is in general invertible, and if $\lambda=0$ then $I- (\bX_{1}^{(j)})^\top \big(  \bX_{1}^{(j)} (\bX_{1}^{(j)})^\top \big)^{-1} \bX_{1}^{(j)}  $ is exactly a projection onto the nullspace of $\bX_{1}^{(j)}$. While \citet{Guo-AAAI2022} showed that updating $\lambda$ in a certain way at every batch improves the empirical performance, inverting $\bX_{1}^{(j)} (\bX_{1}^{(j)})^\top  +\lambda I $ might not be necessary if the goal is to compute (approximate) the nullspace of $\bX_{1}^{(j)}$, which can be done via SVD. In fact, doing SVD would also eliminate the hyper-parameter $\lambda$.

Improvements over the strategy of \citet{Zeng-NMI2019} are given by independent efforts from \citet{Wang-CVPR2021} and \citet{Saha-ICLR2021}. \citet{Wang-CVPR2021} maintain a \textit{weighted}\footnote{In principle (say if the nullspaces of $\bX_i^{(j)}$ intersect), different weights would result in the same nullspace.} sum of covariance matrix $(\bX_{i}^{(j)})^\top \bX_{i}^{(j)}$, where $i$ ranges over previous tasks $1,\dots,t-1$, and then extract an orthonormal basis matrix $K^{(j)}$ the nullspace of some low-rank approximation of this weighted sum. Such $K^{(j)}$ would be the desired $\bK_{t-1}^{(j)}$ defined above, if the intersection $\cap_{i=1}^{t-1}\text{null}(\bX_i^{(j)})$ were not empty. On the other hand, \citet{Saha-ICLR2021} maintain an orthonormal basis for a low-rank approximation of $[(\bX_{1}^{(j)})^\top,\dots, (\bX_{t-1}^{(j)})^\top]$, and they project the gradient onto the orthogonal complement of the subspace spanned by this basis. In light of \eqref{eq:range-null}, the methods of \citet{Saha-ICLR2021} and \citet{Wang-CVPR2021} are in principle equivalent, and their difference in performance (as reported by \citet{Kong-ECCV2022}) relies on implementation details. Furthermore, their approaches are dual to each other and are the \textit{two sides of the same coin}: \citet{Wang-CVPR2021} (approximately) maintains a basis for $\cap_{i=1}^{t-1}\text{null}(\bX_i^{(j)})$ and \citet{Saha-ICLR2021} for the orthogonal complement of $\cap_{i=1}^{t-1}\text{null}(\bX_i^{(j)})$; see Appendix \ref{section:Dual-CPCA} for another example of this duality.

We stress that the provable success of these methods relies on Assumption \ref{assumption:realizability-dagger}, or otherwise pathological cases might arise; recall Example \ref{example:insufficiency} (Past = Present) and see \citet{Lin-ICLR2022} for a different example. Finally, we refer the reader to \citet{Kong-ECCV2022,Liu-ICLR2022,Lin-ICLR2022} for further developments on this line of research.

\subsection{Theoretical Aspects of Continual Learning Through The Lens of Task Relationship}\label{subsection:review-CL-theory}
In order to develop any useful theory for continual learning or related problems that involve multiple tasks, the first step is to make suitable assumptions on the \textit{task relationship} (or \textit{task similarity}, or \textit{task relatedness}). In the literature, there are several candidate assumptions available for modeling the task relationship, which we review here. 

One early assumption on the task relationship is given by \citet{Baxter-JAIR2000} in the context of \textit{learning to learn}, who assumed a statistical data and task generation model such that, a hypothesis space that contains good solutions for a new task can be learned from sufficiently many samples of old tasks. An analogous example can be made in our context: $\ICL^\dagger$ learns $\cK_t$ from old tasks, which serves as the hypothesis space for the next task. 

The task relationship proposed by \citet{Ben-ML2008} involves a family of functions $\cF$. \citet{Ben-ML2008} define two tasks as $\cF$\textit{-related} if the data generation distribution of one task can be transformed into that of the other by some function in $\cF$. This assumption is adopted recently by \citet{Ramesh-ICLR2022} and \citet{Prado-CoLLAs2022} to provide insights on generalization for continual learning. A drawback of this assumption is that the $\cF$-relatedness is typically characterized by an abstract and combinatorial notion of the \textit{generalized VC dimension}, which in our opinion is less intuitive and less ``tangible'' than Assumptions \ref{assumption:realizability-dagger} and \ref{assumption:realizability}. 

\citet{Lee-ICML2021} and \citet{Asanuma-JPSJ2021} contextualize the task relationship in a teacher-student network setup. The tasks are defined by the teacher networks and the similarity of two given tasks is quantified by the difference in the weights of the two corresponding teacher networks. It appears to us that the insights \citet{Lee-ICML2021} and \citet{Asanuma-JPSJ2021} delivered under this teacher-student assumption are now limited to only two tasks.

Yet another task relationship is that every objective $L_t$ is the composition of two functions $f_t\circ g$, where the function $g:\bbR^n\to \bbR^s$ is referred to as the \textit{shared representation}; see, e.g., \citet{Maurer-JMLR2016,Balcan-CoLT2015,Tripuraneni-ICML2021,Du-ICLR2021}. The point is that, if $g$ has been learned from old tasks and fixed, and if $n\gg s$, learning a new task would be more sample-efficient as the dimension of the search space reduces from $n$ to a much smaller $s$. \citet{Li-arXiv2022v2} and \citet{Cao-AISTATS2022} recently explored this idea in the continual learning context with certain new insights, while their settings are from ours. It is possible to obtain stronger generalization bounds and improve our results by leveraging this assumption of a shared representation, at the cost of obtaining a less general theorem than Theorems \ref{theorem:CL1} and \ref{theorem:CL2}.

The task relationship that we consider in this paper is defined by Assumption \ref{assumption:realizability-dagger} or \ref{assumption:realizability}, that is all tasks share at least a common global minimizer. As mentioned in the main paper, this assumption generalizes those of \citet{Evron-COLT2022} and \citet{Pengb-NeurIPS2022}. While we show that Assumptions \ref{assumption:realizability-dagger} and \ref{assumption:realizability} are quite powerful and leads to many pleasant results, in the future we would like to study how strong Assumption \ref{assumption:realizability-dagger} or \ref{assumption:realizability} is for continual learning and come up with other alternatives.

\section{Set-Theoretical Estimation}\label{section:set-theoretical-est}
In the main paper, we mentioned that $\ICL^\dagger$ is related to \textit{set-theoretical estimation} in control. Here, we elaborate on this statement by reviewing related works and interpreting $\ICL^\dagger$ from a control and set-theoretical perspective. 

\subsection{Ideal Continual Learner$^\dagger$ $=$ Predictor + Corrector}
\textit{State estimation} is a fundamental problem in control. From the data $D_0,D_1,\dots,D_T$ observed sequentially over time and the initial state $\bw_0\in \cW\subset \bbR^n$, the goal of state estimation is to estimate the subsequent states $\bw_1,\dots,\bw_T\in \bbR^n$, assuming the following equations\footnote{This is a simplified model, e.g., we did not describe the \textit{control signals} that enable state transfer, but this is enough for illustration.}:
\begin{equation}\label{eq:state-estimation}
	\begin{split}
		\bw_{t} &= f_{t-1}(\bw_{t-1}) \\ 
		D_{t-1} &= h_{t-1}(\bw_{t-1})
	\end{split}  \ \ \ \ \ \ \  t=1,\dots, T
\end{equation}
Here, $f_t$ is some known function that transfer state $\bw_t$ into $\bw_{t+1}$, and $h_t$ is a known function that maps state $\bw_{t}$ into our observation $D_t$. Of course, $T$ is allowed to be $\infty$. 

Since the initial state $\bw_0$ belongs to $\cW$ and the subsequent state $\bw_t$ fulfills \eqref{eq:state-estimation}, $\bw_t$ must belong to some set depending on $\cW$, data $D_0,\dots,D_T$, and \eqref{eq:state-estimation}. Such dependency was described by \citet{Witsenhausen-1968} in the case where $f_{t-1}$ and $h_{t-1}$ are linear maps, and was made mathematically precise by \citet{Kieffer-CDC1998} with a recursive algorithm. Defining $K_0\gets\cW$, the algorithm of \citet{Kieffer-CDC1998} consists of two steps for every $t=1,\dots,T$:
\begin{equation*}
	\begin{split}
		\textnormal{(Prediction)}& \ \ \ \ \  G_t \gets f_{t-1}(K_{t-1}) \\ 
		\textnormal{(Correction)} &\ \ \ \ \ K_t \gets  h_{t}^{-1}(D_{t}) \cap G_t 
	\end{split} 
\end{equation*}
In words, the prediction step collects \textit{all possible} states  $G_t$ that can be transferred from $K_{t-1}$, and the correction step rules out the states that are inconsistent with data $D_t$. 

In light of \citet{Kieffer-CDC1998} and as the notations suggest, we have the following description of $\ICL^\dagger$:
\begin{prop}[Ideal Continual Learner$^\dagger$ = Predictor + Corrector]\label{prop:control}
	Under Assumption \ref{assumption:realizability-dagger}, the recursion \eqref{eq:task-t-continual-dagger} of $\ICL^\dagger$ is equivalent to
	\begin{equation}\label{eq:CL-dagger-PC}
		\begin{split}
			\textnormal{(Prediction)}& \ \ \ \ \  \cG_t \gets \argmin_{w\in \cW} L_t(\bw; D_t) \\ 
			\textnormal{(Correction)} &\ \ \ \ \ \cK_t \gets  \cK_{t-1} \cap \cG_t 
		\end{split} 
	\end{equation}
\end{prop}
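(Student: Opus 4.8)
The plan is to reduce the statement to an elementary fact about constrained minimization, combined with Proposition~\ref{prop:sufficiency}. The fact is this: if $S\subseteq\cW$ is a feasible set that still contains a global minimizer of $L_t$ over $\cW$, then $\argmin_{\bw\in S}L_t(\bw;D_t)=S\cap\cG_t$, where $\cG_t:=\argmin_{\bw\in\cW}L_t(\bw;D_t)$. Indeed, pick $\bw^\star\in S\cap\cG_t$; for every $\bw\in S$ we have $L_t(\bw;D_t)\geq L_t(\bw^\star;D_t)=\min_{\cW}L_t$, so the constrained optimal value equals the unconstrained one, and a point $\bw\in S$ achieves it exactly when $\bw\in\cG_t$. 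The prediction step of \eqref{eq:CL-dagger-PC} is then literally the definition of $\cG_t$, and the correction step is the right-hand side $S\cap\cG_t$ with $S=\cK_{t-1}$.

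First I would argue by induction on $t$, using $\cK_0=\cW$ as the base case. Assume $\cK_{t-1}=\cap_{i=1}^{t-1}\cG_i$ (which, for $t=1$, reads $\cK_0=\cW$). By Assumption~\ref{assumption:realizability-dagger}, $\cap_{i=1}^{T}\cG_i\neq\varnothing$, hence $\cap_{i=1}^{t}\cG_i\neq\varnothing$, i.e. $\cK_{t-1}\cap\cG_t\neq\varnothing$. Applying the set identity above with $S=\cK_{t-1}$, the $\ICL^\dagger$ update \eqref{eq:task-t-continual-dagger}, namely $\argmin_{\bw\in\cK_{t-1}}L_t(\bw;D_t)$, equals $\cK_{t-1}\cap\cG_t$, which is precisely the correction step of \eqref{eq:CL-dagger-PC} fed by the prediction step $\cG_t$. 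This shows the two recursions agree at step $t$ and also yields $\cK_t=\cK_{t-1}\cap\cG_t=\cap_{i=1}^{t}\cG_i$, closing the induction. (Equivalently, one can observe directly that the predictor--corrector recursion produces $\cK_t=\cap_{i=1}^{t}\cG_i$ by trivial induction, and then invoke Proposition~\ref{prop:sufficiency}, which gives the same sets for \eqref{eq:task-t-continual-dagger}; I would present the self-contained version as the main argument and mention this shortcut.)

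The only real obstacle---and it is a mild one---is establishing $\cK_{t-1}\cap\cG_t\neq\varnothing$ at each step: without it, restricting the minimization from $\cW$ to $\cK_{t-1}$ could strictly increase the optimal value, and the constrained minimizer set would no longer coincide with $\cK_{t-1}\cap\cG_t$. This nonemptiness is exactly where Assumption~\ref{assumption:realizability-dagger} enters (via the inductive identity $\cK_{t-1}=\cap_{i=1}^{t-1}\cG_i$ and $\cap_{i=1}^{T}\cG_i\neq\varnothing$). Everything else is bookkeeping: matching $\cW=\cK_0$ with the base case and comparing the two recursions step by step.
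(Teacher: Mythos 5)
Your proposal is correct and matches the paper's reasoning: the paper states Proposition~\ref{prop:control} as an immediate consequence of Proposition~\ref{prop:sufficiency}, whose proof is exactly the induction you give (using Assumption~\ref{assumption:realizability-dagger} to ensure $\cK_{t-1}\cap\cG_t\neq\varnothing$ so that the constrained $\argmin$ over $\cK_{t-1}$ equals $\cK_{t-1}\cap\cG_t$). Your self-contained version simply makes explicit what the paper leaves implicit, and you correctly identify the shortcut via Proposition~\ref{prop:sufficiency}.
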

The prediction step \eqref{eq:CL-dagger-PC} involves solving a single task, while the correction step requires computing the intersection of two complicated sets. Finally, the prediction-correction description also requires storing the knowledge representation $\cK_t$.

\subsection{Ideal Continual Learner$^\dagger$ $=$ Set Intersection Learner$^\dagger$?}
\myparagraph{Background} \citet{Combettes-IEEE1993} formulated a general estimation problem that encompasses many signal processing and control applications. The formulation of \citet{Combettes-IEEE1993} can be described as follows. Suppose we are given some information $D_1,\dots,D_T$, from which we want to estimate a certain variable $\hat{\bw}$. Each piece of information $D_t$ determines a set $\cG_t$ in which $\bw^*$ lies. Assuming we can compute $G_t$ from $D_t$, the problem of estimating $\hat{\bw}$ reduces to computing (a point of) the set intersection $\cap_{t=1}^TG_t$.  If the information we have is \textit{inconsistent} (e.g., due to noise or out-of-distribution measurements), the intersection $\cap_{t=1}^TG_t$ might be empty. This is a more challenging situation, which might result in an NP-hard problem (as we discussed in the main paper). As in Assumption \ref{assumption:realizability-dagger}, we assume $\cap_{t=1}^TG_t$ is non-empty, therefore the difficulty of finding a point in $\cap_{t=1}^TG_t$ largely depends on the \textit{shape} of each set $G_t$.

\myparagraph{Computing Set Intersections} Suppose $\cap_{t=1}^TG_t\neq \varnothing$. Finding a point in $\cap_{t=1}^TG_t$ is a classic problem and has been of great interest nowadays due to many modern applications. Classic methods include the alternating projection method of John von Neumann in 1933 for the case of two intersecting affine subspaces (as \citet{Lindstrom-2021} surveyed); the methods of \citet{Karczmarz-1937} and \citet{Cimmino-1938} for the case where $G_t$'s are  finitely many affine hyperplanes (as \citet{Combettes-IEEE1993} reviewed); the method of \citet{Boyle-1986} for computing a point in the intersection of two convex sets; as well as many extensions of these methods for finitely many convex sets and even non-convex sets. We refer the reader to \citet{Borwein-2014} for some equivalence among these methods and to  \citet{Theodoridis-SPM2010,Lindstrom-2021} and some references of \citet{Borwein-2014} for surveys. See also \citet{Evron-COLT2022} where some extensions of \citet{Karczmarz-1937} were discussed in a continual learning context.

\myparagraph{The Continual Learning Context} The Ideal Continual Learner$^\dagger$ ($\ICL^\dagger$) might be viewed as a method that computes the set $\cG_t$ of global minimizes for each task $t$ and then computes their intersection $\cap_{t=1}^T \cG_t$. If all objectives $L_t$'s are convex, then in general $\cG_t$ is a convex set, and it is possible to implement $\ICL^\dagger$ via computing intersections of convex sets. However, there are still several issues that make the continual learning problem harder than computing set intersections:
\begin{itemize}
    \item The first difficulty is to find a representation of $\cG_t$ that is amenable to alternating optimization  (we discussed several equivalent representations of $\cG_t$ in \S \ref{section:optimization}). Such representation should consume as less memory as possible, and yet might purely rely on a user's choice; e.g., one might revisit past tasks as an implicit way of storing $\cG_t$ \citep{Evron-COLT2022}.
    \item The other difficulty occurs when there are infinitely many tasks ($T=\infty$), and this basically rules out the chance of storing all sets $\cG_t$'s. Similar issues have been considered in the signal processing context, where one is given the set $\cG_t$ sequentially; see \citet{Theodoridis-SPM2010} for a review. However, theoretical guarantees of existing methods in that line of research are about convergence to $\cap_{t=t_0}^\infty \cG_t$ for some number $t_0$  (potentially unknown), not to $\cap_{t=1}^\infty \cG_t$. The reason that $t_0$ is in general unknown is this: These methods are some variants of alternating projections, which might \textit{forget} the very first set $\cG_1$ after projecting onto later sets.
\end{itemize}

\section{Streaming PCA, Incremental SVD, and Subspace Tracking}\label{section:streaming-PCA}
As mentioned in the main paper, the continual matrix factorization formulation is related to several classic problems, \textit{streaming principal component analysis} (PCA), \textit{incremental singular value decomposition} (SVD), and \textit{subspace tracking}. These problems have been extensively studied in machine learning, numerical linear algebra, and signal processing communities. While we refer the reader to \citet{Balzano-IEEE2018} and \citet{Vaswani-SPM2018} for comprehensive reviews, we discuss several related methods here, aiming at highlighting their connections to continual learning.

\myparagraph{Streaming PCA} PCA refers to modeling a data matrix $[\bY_1\ \cdots\ \bY_T]$ with some low-dimensional subspace, and it is a century-old topic \citep{Pearson-1901} that has many applications and extensions \citep{jolliffe-PCA2002,Vidal-GPCA2016}. Streaming PCA differs from PCA in that one receives $\bY_1,\dots,\bY_T$ sequentially, and each $\bY_t$ is seen only once. A popular method for streaming PCA is due to \citet{Oja-1982}. Assume we want to project the data onto a $r$-dimensional subspace of $\bbR^n$, with $r$ a hyper-parameter. The method of \citet{Oja-1982} initializes a random orthonormal matrix $\bU^{(1)}\in\bbR^{n\times r}$, updates
\begin{align}\label{eq:Oja}
    \bU^{(k+1)} \gets \textnormal{orth} \Big( \bU^{(k)}  + \gamma^{(k)} \bY_k \bY_k^\top  \bU^{(k)}  \Big)
\end{align}
at iteration $k$ with stepsize $\gamma^{(k)}$. Here  $\textnormal{orth}(\cdot)$ orthogonalizes an input matrix (e.g., via QR or SVD decomposition). The update \eqref{eq:Oja} can be regarded as a stochastic projected gradient descent method; in particular, $\gamma^{(k)}\to \infty$, then \eqref{eq:Oja} becomes a stochastic version of the power method that computes $r$ eigenvectors of $[\bY_1\ \cdots\ \bY_T]$ corresponding to $r$ maximum eigenvalues. Similarly to SGD or the algorithm of \citet{Evron-COLT2022}, in the worse case, this stochastic method forgets if it does not revisit past data, and even if  $[\bY_1\ \cdots\ \bY_T]$ is exactly of rank $r$; note that we did not give a formal proof for this claim, but it can be derived by combining the minimality of $\cK_t$ (\S \ref{subsection:2principles}) with problem-specific arguments.

\myparagraph{Incremental SVD} The incremental SVD problem is to compute the SVD of $[\bY_1\ \bY_2]$ from the SVD of a matrix $\bY_1=\bU_1 \bm{\Sigma}_1 \bV^\top_1$. This is a different and slightly more difficult goal from continual matrix factorization, which aims to estimate (an orthonormal basis matrix of) a subspace sum. Of course, we could do incremental SVD for increasingly more data, and extract the desired basis matrix from the final SVD decomposition of $[\bY_1\ \cdots\ \bY_t]$; in other words, incremental SVD algorithms can be used to solve the continual matrix factorization problem. The issue is that it takes more memory to store the SVD than storing only a basis matrix as in our approach, and potentially takes more time for computation. 

To better understand the matter, we review a  popular incremental SVD method, described in Section 3 of \citet{Brand-ECCV2002} (see also \citet{Bunch-1978}). 
With $Y_2:=(I - \bU_1 \bU_1^\top) \bY_2$ and a QR-decomposition $Q_2R_2=Y_2$ of $Y_2$, we have
\begin{align*}
    [\bY_1 \ \ \bY_2] &= [\bU_1 \bm{\Sigma}_1 \bV^\top_1\  \bY_2] \\
    &= [\bU_1\ \  Q_2 ] \begin{bmatrix}
        \bm{\Sigma}_1 & \bU_1^\top \bY_2 \\
        0&R_2
    \end{bmatrix} \begin{bmatrix}
        \bV_1 & 0 \\
        0 & I
    \end{bmatrix}^\top
\end{align*}
Since $[\bU_1\ \  Q_2 ]$ and $\begin{bmatrix}
        \bV_1 & 0 \\
        0 & I
    \end{bmatrix}$ have orthonormal columns, computing the SVD of $[\bY_1 \ \ \bY_2]$ reduces to computing the SVD $U_2 \bm{\Sigma}_2 V_2^\top$ of the middle matrix $\begin{bmatrix}
        \bm{\Sigma}_1 & \bU_1^\top \bY_2 \\
        0&R_2
    \end{bmatrix}$. Indeed, with SVD $U_2 \bm{\Sigma}_2 V_2^\top$, we can obtain the desired decomposition of $[\bY_1 \ \ \bY_2]$ as $\bU_2 \bm{\Sigma}_2 \bV_2^\top$, where $\bU_2$ and $\bV_2$ are defined as
\begin{align}\label{eq:post-multiply}
    \bU_2\gets [\bU_1\ \  Q_2 ] U_2, \ \ \bV_2 = \bV_1 V_2.
\end{align}

Note that this approach can also be regarded as an \textit{\textbf{expansion-based method}}, and by design it never forgets when applied to continual matrix factorization. However, it requires an extra QR decomposition, and it requires an SVD on a larger matrix than in our approach (\S \ref{subsection:CMF}). A  subtler drawback is that this method performs SVD and then multiplication \eqref{eq:post-multiply} to obtain the orthonormal basis. This is potentially numerically unstable; as \citet{Brand-LAA2006} later commented: 
\begin{center}
    \textit{Over thousands or millions of updates, the multiplications may erode the orthogonality of $\bU'$ [e.g., $\bU_2$] through numerical error... [\citet{Brand-LAA2006} described some ad-hoc remedy] ... It is an open question how often this is necessary to guarantee a certain overall level of numerical precision. }
\end{center}
This numerical issue does not exist in our approach, where our multiplication is followed by an SVD; then from the SVD we directly extract the desired basis matrix, which is typically orthonormal up to machine accuracy given the numerical stability of SVD algorithms (preconditioned if necessary). The reason that we could do so, though, stems from the fact that we need only to compute an orthonormal basis of the subspace sum $\cS_1+\cdots + \cS_t$ for the continual matrix factorization problem, which is arguably simpler than incrementally updating the SVD. As a final remark, our approach can be extended to update the $U$ and $V$ matrices for incremental SVD, at the potential cost that updating the singular values could be numerically unstable. Nevertheless, discussing these is beyond the scope of the paper, and hence we omit the details.

\myparagraph{Subspace Tracking} Suppose now we receive data $\bY_1,\dots,\bY_T$ sequentially, with $\cS_t=\text{range}(\bY_t)$. In the problem of subspace tracking, we assume $\cS_t$ changes slowly, as $t$ increases, and the goal at task $t$ or time $t$ is to estimate $\cS_t$. To do so, a classic idea (e.g., see \citet{Yang-TSP1995,Yang-SP1996}) is to define a \textit{forgetting} factor $\beta$ with $0\ll \beta \leq 1$, and minimize  the following objective in a streaming fashion:
\begin{align}
    \min_{\bU} \sum_{t=1}^T \beta^{T-t} \| \bY_t - \bU\bU^\top \bY_t\|_{\textnormal{F} }^2.
\end{align}
Here, the forgetting factor $\beta$ discounts the previous samples, accounting for the fact that $\cS_t$ is slowly changing; more generally, such factor $\beta$ also appears in many classic reinforcement learning algorithms as a \textit{discount rate} to discount the past rewards \citep{Sutton-book2018}. There are also works that make mathematically precise assumptions on \textit{how} $\cS_t$ changes over time. For example, $\cS_t$ could be slowly rotated, or keep static for a sufficiently long time until changing \citep{Narayanamurthy-TIT2018}. Intuitively, making how $\cS_t$ changes precise give chances to design better algorithms, although we shall not review these approaches. The point here is that, this classic line of research furnished the very rudimentary idea of what is recently known as \textit{selective forgetting}  or \textit{active forgetting}  in the deep continual learning context \citep{Aljundi-ECCV2018,Wang-NeurIPS2021,Shibata-IJCAI2021,Liu-CoLLAs2022}. Of course, achieving selective or active forgetting in a principled way for deep continual learning is  much harder than for subspace tracking.

\section{A Dual Approach to Continual Principal Component Analysis}\label{section:Dual-CPCA}
\myparagraph{Background} In \S \ref{subsection:CMF}, we considered continual matrix factorization where task $t$ is associated with the loss $L_t\big((\bU,\bC); \bY_t \big) = \| \bU \bC -\bY_t  \|_\textnormal{F}^2$, and this loss is equivalent to a PCA objective  $ \| \bU \bU^\top \bY_t -\bY_t  \|_\textnormal{F}^2$ in variable $\bU$, assuming $\bU^\top\bU$ is the identity matrix. Since $\bU$ gradually grows its columns to accommodate new tasks, this might eventually consume too much memory. As promised in Remark \ref{remark:grow},  we now address this issue, using the following key observation: If $\bU$ and $\bB$ are orthonormal basis matrices\footref{footnote:matsize} of some linear subspace and its orthogonal complement, respectively, then $\bU\bU^\top +\bB\bB^\top$ is the identity matrix, and we have
\begin{align}\label{eq:Primal=Dual}
    \| \bU \bU^\top \bY_t -\bY_t  \|_\textnormal{F}^2 = \| \bB \bB^\top\bY_t  \|_\textnormal{F}^2 = \| \bY_t^\top \bB \|_\textnormal{F}^2.
\end{align}
Note that relation \eqref{eq:Primal=Dual} was discussed by \citet{Tsakiris-JMLR2018} in a different context, in comparison to the algorithm of \citet{Lerman-FoCM2015}. In that line of research, $\bU$ is typically referred to as the \textit{primal representation} of the subspace $\cS_t$ and $\bB$ the dual representation. To conclude, we can use $\| \bY_t^\top \bB \|_\textnormal{F}^2$ as the objective for task $t$, and the aim for task $t$ is to find an orthonormal basis matrix for subspace $\cS^\perp_t$, which is uniquely identified with $\cS_t$. The goal of \textit{continual principal component analysis} is to find the subspace intersection $\cap_{i=1}^t\cS^\perp_i$ upon encountering task $t$ ($\forall t$). 

\myparagraph{Implementing The Ideal Continual Learner$^\dagger$} We now describe $\ICL^\dagger$ for solving the problem of continual PCA. Denote by $B_t$ a basis matrix for $\cap_{i=1}^t\cS^\perp_i$ with $B_t^\top B_t=I$, we have $\| \bY_i^\top B_t \|_\textnormal{F}=0$ for every $i=1,\dots, t$. Therefore, for each task, we maintain and update $B_t$. To start with, set $B_1$ to be the matrix whose columns are all right singular vectors of $\bY_t^\top$ corresponding to its zero singular values. Suppose now $t>1$ and we are given $B_{t-1}$ and $\bY_t$, and we wish to compute a $B_t$. Of course $\cap_{i=1}^t\cS^\perp_i \subset \cap_{i=1}^{t-1}\cS^\perp_i$, hence $B_t$ has fewer or equal columns than $B_{t-1}$ and each column of $B_t$ can be represented as a linear combination of columns of $B_{t-1}$. Therefore, parameterize $B_t$ as $B_t=B_{t-1} \overline{C}_t$, and we solve
\begin{align*}
    &\ \overline{C}_t\in\argmin_{C_t} \| \bY_t^\top B_{t-1} C_t\|_\textnormal{F}^2 \ \ \text{s.t.}  \ \ (B_{t-1} C_t)^\top B_{t-1} C_t = I \\ 
    \Leftrightarrow & \ \overline{C}_t\in\argmin_{C_t} \| \bY_t^\top B_{t-1} C_t \|_\textnormal{F}^2 \ \ \text{s.t.}  \ \  C_t^\top C_t = I,
\end{align*}
where the equivalence is due to $B_{t-1}^\top B_{t-1} = I$. As a result, we can set $\overline{C}_t$ to be the matrix whose columns are all right singular vectors of $\bY_t^\top B_{t-1}$ corresponding to its zero singular values. This concludes how $\ICL^\dagger$ can be implemented for continual PCA.

\myparagraph{Summary} We now see \textit{two sides of the same coin}: We can implement $\ICL^\dagger$ for continual matrix factorization (or continual PCA), using either the primal or dual representation. The primal representation ($U_t$) grows while the dual representation ($B_t$) shrinks. This is not to say that the dual representation is always better. The dual representation saves memory (and is thus preferred) if and only if $\dim(\sum_{i=1}^t \cS_i)$ exceeds half of the ambient dimension. Hence, a better implementation for continual matrix factorization  (or continual PCA) would switch the representation from primal to dual, as $t$ increases such that $\dim(\sum_{i=1}^t \cS_i)$ is greater than that threshold. Nevertheless, the total memory needed in both representations is bounded above by $n\times \dim(\sum_{i=1}^t \cS_i)$. This seems to contradict the results of \citet{Knoblauch-ICML2020} and \citet{Chen-FOCS2022}, which suggest that continual learning without forgetting would require memory to grow linearly with the number $T$ of tasks. The catch is that, besides other differences in settings, their results are for the worst-case scenario, while our continual matrix factorization or continual PCA setting is quite specific and does not belong to the worst case.

\section{Proofs}
\begin{proof}[Proof of Proposition \ref{prop:sufficiency}]
	Clearly $\cK_1 = \cG_1$. Assume $\cK_{t-1}=\cap_{i=1}^{t-1} \cG_i$. Since $L_t$ is minimized over $\cG_t$ \eqref{eq:task-t-dagger} and $\cap_{i=1}^t \cG_i\neq \varnothing$, the set of global minimizers of \eqref{eq:task-t-continual-dagger} is  $\cap_{i=1}^t \cG_i$. By definition \eqref{eq:task-t-continual-dagger} we have $\cK_t =\cap_{i=1}^t \cG_i$. The proof is complete by induction.
\end{proof}

\begin{proof}[Proof of Proposition \ref{prop:multitask}]
	Let $\hat{\bw}_t\in \cK_t=\cap_{i=1}^t\cG_i$,  then we have 
	\begin{equation}
		\begin{split}
			\min_{w\in \cW} \sum_{i=1}^t  \alpha_i L_i(\bw; D_i) &\geq \sum_{i=1}^t \alpha_i \min_{w\in \cW}    L_i(\bw; D_i) \\
			&=\sum_{i=1}^t \alpha_i  L_i(\hat{\bw}_t; D_i).
		\end{split}
	\end{equation}
	Since $\hat{\bw}_t$ is feasible to the multitask objective \eqref{eq:CL=multitask}, the minimum value of the multitask objective is attained exactly when each objective $L_i$ is minimized, meaning that $\cK_t$ is the set of global minimizers of the multitask objective.
\end{proof}

\begin{proof}[Proof of Theorem \ref{theorem:CL1}]
    Assumption \ref{assumption:realizability-dagger} and Proposition \ref{prop:sufficiency} imply $\hat{\bw}$ is a common global minimizer of all empirical tasks \eqref{eq:task-t-dagger}. For $t=1,\dots,T$, let $\bw^*_t\in \cG^*_t$ be a global minimizer of learning task $t$ \eqref{eq:population}. Applying a union bound to \eqref{eq:uniform-convergence}, we get
	\begin{equation}\label{eq:some-inequality}
		\begin{split}
			\bbE_{\bd\sim \cD_t }[ \ell_t( \hat{\bw}; \bd ) ] - L_t(\hat{\bw}; D_t) &\leq \zeta(m_t,\delta'), \\
			L_t(\bw^*_t; D_t) - \bbE_{\bd\sim \cD_t }[ \ell_t( \bw^*_t; \bd ) ]  &\leq \zeta(m_t,\delta'),
		\end{split}
	\end{equation}
    for all $t=1,\dots,T$, with probability at least $1-T\delta'$. Summing up the inequalities of \eqref{eq:some-inequality} and noticing $L_t(\hat{\bw}; D_t)\leq L_t(\bw^*_t; D_t) $ and $c^*_t = \bbE_{\bd\sim \cD_t }[ \ell_t( \bw^*_t; \bd ) ]$, we then obtain
	\begin{align*}
		\bbE_{\bd\sim \cD_t }[ \ell_t( \hat{\bw}; \bd ) ]  \leq c^*_t + 2\zeta(m_t,\delta') = c^*_t + \zeta(m_t,\delta').
	\end{align*}
    The last equality is due to the big-$O$ notation \eqref{eq:uniform-convergence}. With $\delta:=\delta'/T$ we finish the proof.
\end{proof}

\begin{proof}[Proof of Proposition \ref{prop:implement-ICL-CLR}]
    Given $(\bX_1,\by_1)$, we can compute $\bK_1$ via an SVD on $\bX_1$, e.g., set $\bK_1$ to be the matrix whose columns are right singular vectors of $\bX_1$ corresponding to its zero singular values. We can compute a particular solution $\hat{\bw}_1:= \bX_1^{\dagger} \by_1$ to the normal equations $\bX_1^\top \bX_1 \bw = \bX_1^\top \by_1$; here $\bX_1^{\dagger}$ denotes the pseudoinverse of $\bX_1$, and can be computed from SVD of $\bX_1$. 
    
    For the case $t>1$, suppose we are given $(\hat{\bw}_{t-1},\bK_{t-1})$ and data $(\bX_t,\by_t)$. Then, by the definitions of $(\hat{\bw}_{t-1},\bK_{t-1})$ and $\cK_{t-1}$, the equivalence between \eqref{eq:continual-CLR} and \eqref{eq:continual-CLR-unconstrained} is immediate. 

    The normal equations of \eqref{eq:continual-CLR-unconstrained} are given by ($\overline{\bX}_t:= \bX_t \bK_{t-1}$)
    \begin{align*}
    	\overline{\bX}_t^\top \overline{\bX}_t \ba = \overline{\bX}_t^\top (\by_t -  \bX_t \hat{\bw}_{t-1}).
    \end{align*}
    Therefore, we can compute $\hat{\ba}_t= \overline{\bX}_t^\dagger (\by_t -  \bX_t \hat{\bw}_{t-1})$ as  a global minimizer of \eqref{eq:continual-CLR-unconstrained}, where the pseudoinverse $\overline{\bX}_t^\dagger$ can be calculated via an SVD on $\overline{\bX}_t$. Then a common global minimizer $\hat{\bw}_t\in\cK_t$ is given as $\hat{\bw}_t=\hat{\bw}_{t-1} + \bK_{t-1} \hat{\ba}_t$. It remains to compute an orthonormal basis matrix $\bK_t$ for the intersection of the nullspaces of $\bX_1,\dots,\bX_t$, that is the intersection of the nullspace of $\bX_t$ and the range space of $\bK_{t-1}$. This can be done as follows. First, compute an orthonormal basis for the nullspace of $\overline{\bX}_t$, then left-multiply this basis by $\bK_{t-1}$. This yields the desired $\bK_t$.
\end{proof}

\begin{proof}[Proof of Proposition \ref{prop:approximately-optimal}]
	The proof is obtained in the same way as in Theorem \ref{theorem:CL1}, by exchanging the empirical part (e.g., $L_t(\bw,D_t)$ and $c_t$) with the population part (e.g., $\bbE_{\bd\sim \cD_t }[ \ell_t(\bw; \bd ) ]$ and $c^*_t$).
\end{proof}

\begin{proof}[Proof of Theorem \ref{theorem:CL2}]
	The feasibility of the relaxed Continual Learner$^\dagger$ follows from Proposition \ref{prop:approximately-optimal}, which also suggests that
	\begin{align}\label{eq:optimal=>feasible}
		L_t(\overline{\bw}; D_t) \leq c_t + \zeta(m_t,\delta/T), \ \ \forall t=1,\dots,T
	\end{align}
	with probability at least $1-\delta$. Note that in \eqref{eq:optimal=>feasible} the uniform convergence bound of Assumption \ref{assumption:uniform-convergence} was invoked for each task, meaning that we have $\bbE_{\bd\sim \cD_t }[ \ell_t( \overline{\bw}; \bd ) ] \leq L_t(\overline{\bw}; D_t) + \zeta(m_t,\delta/T)$ and $c_t \leq c^*_t + \zeta(m_t,\delta/T)$. Substitute them into \eqref{eq:optimal=>feasible} to get $\bbE_{\bd\sim \cD_t }[ \ell_t( \overline{\bw}; \bd ) ]  \leq c^*_t + 3\zeta(m_t,\delta/T)$, finishing the proof.
\end{proof}

\begin{proof}[Proof of Theorem \ref{theorem:rehearsal}]
    By Proposition \ref{prop:approximately-optimal}, with probability at least $1-\delta$, we have $\overline{\bw}$ satisfying
    \begin{align*}
        \frac{1}{m_T}\sum_{i=1}^{m_T}\ell_T (\overline{\bw}; \bd_{Ti}) + \sum_{t=1}^{T-1}\frac{1}{s_t}\sum_{i=1}^{s_t}\ell_t (\overline{\bw}; \bd_{ti}) \leq  c_T + \zeta(m_T, \delta/T) +  \sum_{t=1}^{T-1} \big( \hat{c}_t + \zeta(s_t,\delta/T) \big),
    \end{align*}
    where $\hat{c}_t$ is the minimum of $\frac{1}{s_t}\sum_{i=1}^{s_t}\ell_t (\bw; \bd_{ti})$ over $\cW$, and $c_T$ the minimum of $\frac{1}{m_T}\sum_{i=1}^{m_T}\ell_T (\bw; \bd_{Ti})$ over $\cW$. Then, the proof technique of Theorem \ref{theorem:CL2} implies \eqref{eq:generalization-bound-CL3}, and the proof is complete.
\end{proof}

\end{document}